\newcommand{\E}{\mathbb{E}}
\newcommand{\RR}{\mathbb{R}}
\renewenvironment{proof}{\noindent\textbf{Proof.}\hspace*{.3em}}{\qed\\}
\newenvironment{proof-sketch}{\noindent\textbf{Proof Sketch}
  \hspace*{0.em}}{\qed\bigskip\\}
\newenvironment{proof-idea}{\noindent\textbf{Proof Idea}
  \hspace*{0.em}}{\qed\bigskip\\}
\newenvironment{proof-of-lemma}[1][{}]{\noindent\textbf{Proof of Lemma {#1}.}
  \hspace*{0.em}}{\qed\\}
\newenvironment{proof-of-corollary}[1][{}]{\noindent\textbf{Proof of Corollary {#1}.}
  \hspace*{0.em}}{\qed\\}
\newenvironment{proof-of-theorem}[1][{}]{\noindent\textbf{Proof of Theorem {#1}.}
  \hspace*{0.em}}{\qed\\}
\newenvironment{proof-attempt}{\noindent\textbf{Proof Attempt}
  \hspace*{0.em}}{\qed\bigskip\\}
\newtheorem{theorem}{Theorem}[section]
\newtheorem{lemma}{Lemma}[section]
\renewcommand*{\backref}[1]{\ifx#1\relax \else Page #1 \fi}
\renewcommand*{\backrefalt}[4]{%
  \ifcase #1 \footnotesize{(Not cited.)}%
  \or        \footnotesize{(Cited on page~#2.)}%
  \else      \footnotesize{(Cited on pages~#2.)}%
  \fi
}
\newcommand*{\colorboxed}{}
\def\colorboxed#1#{%
  \colorboxedAux{#1}%
}
\newcommand*{\colorboxedAux}[3]{%
  \begingroup
    \colorlet{cb@saved}{.}%
    \color#1{#2}%
    \boxed{%
      \color{cb@saved}%
      #3%
    }%
  \endgroup
}
\numberwithin{equation}{section}
\newcommand{\todol}[2][]{{%
 \let\marginpar\marginnote
 \reversemarginpar
 \renewcommand{\baselinestretch}{0.8}%
 \todo[color=yellow]{#2}}}
\title{Scalable Parameter and Memory Efficient Pretraining for LLM: Recent Algorithmic Advances and Benchmarking}
\author{
{Athanasios Glentis} \thanks{Department of Electrical and Computer Engineering, University of Minnesota. Equal contribution. \texttt{ glent007@umn.edu}}
\and
{Jiaxiang Li} \thanks{Department of Electrical and Computer Engineering, University of Minnesota. Equal contribution.\texttt{ li003755@umn.edu}}
\and
{Qiulin Shang} \thanks{Peking University. Equal contribution. \texttt{qiulin13145@gmail.com}}
\and
{Andi Han} \thanks{School of Mathematics and Statistics, University of Sydney. \texttt{andi.han@sydney.edu.au }}
\and
{Ioannis Tsaknakis} \thanks{Department of Electrical and Computer Engineering, University of Minnesota. \texttt{tsakn001@umn.edu}}
\and
{Quan Wei} \thanks{Department of Electrical and Computer Engineering, University of Minnesota. \texttt{wei00355@umn.edu}}
\and
{Mingyi Hong} \thanks{Department of Electrical and Computer Engineering, University of Minnesota. \texttt{mhong@umn.edu}}
}
\date{}
\begin{document}
\maketitle

\begin{abstract}
    Fueled by their remarkable ability to tackle diverse tasks across multiple domains, large language models (LLMs) have grown at an unprecedented rate, with some recent models containing trillions of parameters. This growth is accompanied by substantial computational challenges, particularly regarding the memory and compute resources required for training and fine-tuning. Numerous approaches have been explored to address these issues, such as LoRA. While these methods are effective for fine-tuning, their application to pre-training is significantly more challenging due to the need to learn vast datasets. Motivated by this issue, we aim to address the following questions: {\bf Can parameter- or memory-efficient methods enhance pre-training efficiency while achieving performance comparable to full-model training? How can the performance gap be narrowed?} To this end, the contributions of this work are the following. (1) We begin by conducting a comprehensive survey that summarizes state-of-the-art methods for efficient pre-training. (2) We perform a benchmark evaluation of several representative memory efficient pre-training approaches to comprehensively evaluate their performance across model sizes. We observe that with a proper choice of optimizer and hyperparameters, full-rank training delivers the best performance, as expected. We also notice that incorporating high-rank updates in low-rank approaches is the key to improving their performance. (3) Finally, we propose two practical techniques, namely weight refactorization and momentum reset, to enhance the performance of efficient pre-training methods. We observe that applying these techniques to the low-rank method (on a 1B model) can achieve a lower perplexity than popular memory efficient algorithms such as GaLore and Fira, while simultaneously using about $25\%$ less memory.
    The code used for the experiments is provided in \url{https://github.com/OptimAI-Lab/Memory_Efficient_Pretraining}.
\end{abstract}

\section{Introduction}

The size of large language models (LLMs) has grown exponentially in recent years, driven by their unprecedented ability to generate high-quality text, solve complex problems, and perform a wide range of tasks across domains. For instance, GPT-3 contains 175 billion parameters~\citep{ye2023comprehensive}, and more recent models like GPT-4 and PaLM-2 have further pushed the boundary, with hundreds of billions to trillions of parameters~\citep{achiam2023gpt,anil2023palm}. This rapid scaling has, however, brought significant computational challenges, particularly in terms of the memory and compute resources required for both training and deployment.

While pre-training these large models remains computationally intensive, many efforts have focused on making downstream fine-tuning more efficient. Parameter-Efficient Fine-Tuning (PEFT) techniques such as Low-Rank Adaptation (LoRA, \cite{hu2022lora}) have been particularly impactful. LoRA introduces a small number of trainable low-rank matrices to adapt pre-trained weights without modifying the original model parameters. This approach significantly reduces the number of trainable parameters and memory footprint during fine-tuning, enabling practitioners to adapt large models for specific tasks on resource-constrained scenarios. Other methods, such as compression~\citep{zhu2024survey}, pruning~\citep{sun2023simple,ma2023llm} and quantization~\citep{egashira2024exploiting,liu2024spinquant}, further optimize this process.

While parameter-efficient techniques like LoRA excel in fine-tuning scenarios, extending these methods to pre-training poses a fundamental challenge: pre-training involves processing massive datasets over billions of tokens, which requires the model to learn comprehensive representations, whereas PEFT techniques such as LoRA significantly limit parameter updates to smaller spaces. As a result, it is hard to directly adapt these methods to improve pre-training efficiency while still maintaining a comparable performance with the full-model training.

Recently, a growing number of studies on parameter and memory efficient pre-training have emerged. {Notably, memory-efficient optimizers, such as GaLore~\citep{zhao2024galore,su2025galore} (which reduces optimizer state memory using low-rank projections) still fall short of full-rank pre-training due to the substantial loss of gradient information.} 
GaLore draws substantial attention from the community due to its effectiveness in maintaining a full-rank model while significantly reducing the memory required for storing optimizer states (see Section \ref{sec:review} for details). All the studies in this direction aim at a fundamental research question:

\begin{center}
\begin{minipage}{40em}
  \textit{Can parameter- or memory-efficient methods achieve performance comparable to full-rank pretraining? If not, how can the performance gap be narrowed?}
\end{minipage}
\end{center}

{The main objective of this work is to address this question. To achieve this goal we make the following contributions: 
\begin{enumerate}
    \item First, we conduct a survey to provide a systematic summary and comparison of recent advances in efficient pre-training strategies. To maintain focus and clarity, we only include works that either target pre-training specifically or have verified their applicability in enhancing parameter- or memory-efficient pre-training.
    
    \item Next, we present a comprehensive benchmark\footnote{The code for the benchmark is publicly available at: \url{https://github.com/OptimAI-Lab/Memory_Efficient_Pretraining}.} evaluation of several representative LLM pre-training methods across different models sizes. The results particularly highlight the \textit{importance of full-rankness in pre-training}. 
    \begin{enumerate}
        \item \underline{First}, we show that the full-rank pre-training still achieves the best performance with proper optimizer and hyperparameters. 

        \item \underline{Secondly}, We show that the (vanilla) low-rank method (where the weight matrix is factorized as $BA$ and both $A$ and $B$ are updated) can be used for pre-training with certain performance drop, contradicting recent studies such as \cite{zhao2024galore} where low-rank method fails completely. 

        \item \underline{Thirdly}, we observe that restoring full-rankness in low-rank methods significantly boosts the performance of parameter-efficient pre-training. For instance, Fira, a low-rank method which involves high-rank updates, yields better performance than lower-rank update methods such as GaLore.
    \end{enumerate}
    
    \item Finally, we propose two practical techniques, refactorization and momentum reset, that substantially enhance the performance of parameter-efficient pre-training, towards matching the performance of full-rank models. We observe that applying these techniques to the low-rank method (on a 1B model) allows us to achieve lower perplexity than GaLore and Fira, while simultaneously using about $25\%$ less memory.
\end{enumerate}
}

\section{Review on parameter-efficient pre-training methods}\label{sec:review}

\begin{figure}[t]
    \begin{center}
    \includegraphics[width=0.6\textwidth]{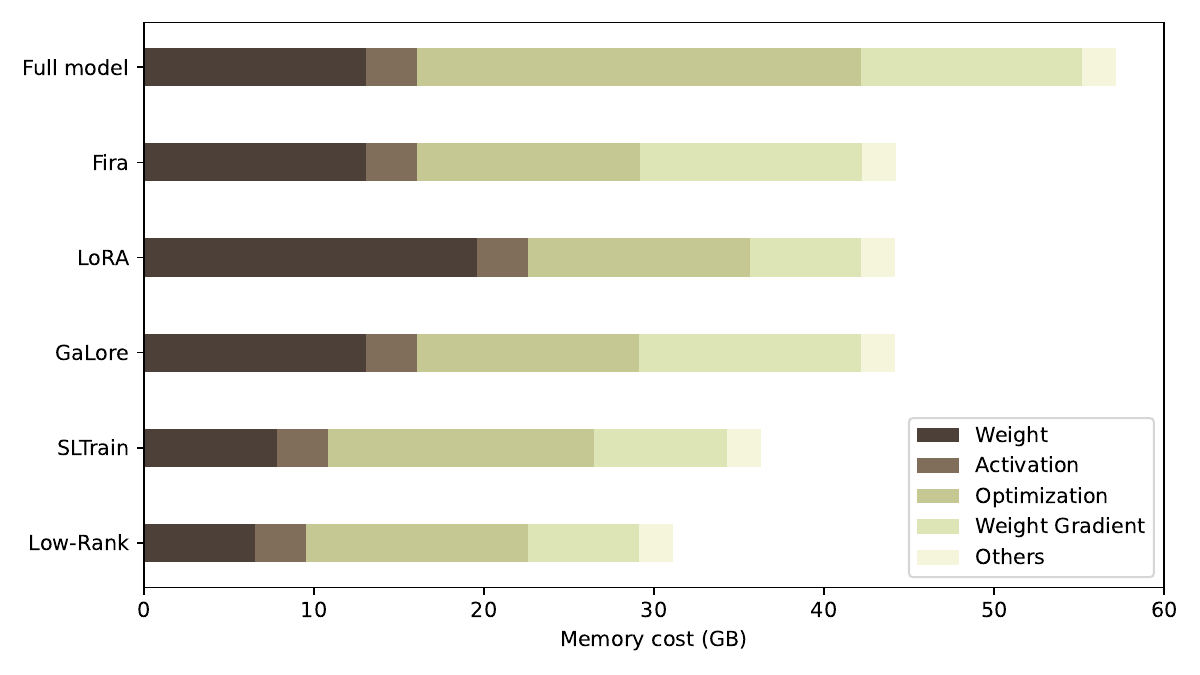}
    \caption{\footnotesize Estimated memory consumption of pre-training and inference for a 7B model with a token batch size of 256 on a single device. All methods use BF16 format and AdamW (without optimizer and activation checkpointing). For low-rank, LoRA and GaLore we assume a rank of $512$ (with the original attention head dimension as $2048$), and for SLTrain we use rank $512$ and $\delta=0.1$. The weight factorization methods (low-rank and SLTrain) save more memory compared to memory-efficient optimizers (GaLore and Fira), with a possible compromise on the performance (See Section \ref{sec:bench_res} for the results). 
    }
    \label{fig:mem_category}
    \end{center}
\end{figure}

In this section we review existing works on parameter/memory-efficient pre-training. From Figure \ref{fig:mem_category}, the memory required for training can be roughly categorized into the following categories: 1. memory for storing \textit{model weights}; 2. memory for \textit{activations}; 3. memory for parameter \textit{gradients}; 4. memory for \textit{optimizer states}, such as momentum; 5. other memories, such as GPU system memory. Various methods have been proposed, focusing on reducing different categories of the memory. \underline{First}, a large number of works focus on proposing new \textbf{optimizers} for parameter/memory-efficient training, usually leading to the reduction of the memory of gradient and optimizer states; \underline{Second}, another line of works focus on \textbf{factorizing} the weight matrices, reducing the effective parameters that need to be trained, leading to reduced memory utilization as well; \underline{Third}, {other methods mostly based on system-level or low-level innovations, such as compression, quantization, etc}. In the following sub-sections, we will discuss these three categories of methods in detail.

\begin{table}
  \centering
\resizebox{0.90\textwidth}{!}{%
\begin{tabular}{c|cccc|c}
    \toprule
    Method & Weight & Activation & Optimizer & Gradient & Comments \\
    \midrule
    Galore \citep{zhao2024galore} &  &  & \checkmark &  \\
    ReLoRA \citep{lialin2024relora} &  &  &  \checkmark &  \checkmark \\
    SwitchLoRA \citep{zhou2024revolutionizing} &  &  & \checkmark & \checkmark \\
    SLTrain \citep{han2024sltrain} & \checkmark &  & \checkmark & \checkmark \\
    \citep{wei2024building} & \checkmark &  & \checkmark & \checkmark \\
    {BTT} \citep{qiu2024compute} & \checkmark &  & \checkmark & \checkmark \\
    {CoMERA} \citep{yang2024comera} & \checkmark &  & \checkmark & \checkmark \\
    {LORO} \citep{mo2025parameter} & \checkmark &  & \checkmark & \checkmark \\
    NeuZip \citep{hao2024neuzip} & \checkmark & &  &  \\
    LPA \citep{lv2024scalable} & \checkmark &  & \checkmark & \checkmark \\
    {Fira} \citep{chen2024fira} & &  & \checkmark & \checkmark \\
    {Natural Galore} \citep{das2024natural} & &  & \checkmark &  \\
    {Q-Galore} \citep{zhang2024q} & &  & \checkmark &  & Quantize \\
    COAP \citep{xiao2024coap} & &  & \checkmark &  \\
    {CompAct} \citep{shamshoum2024compact} & & \checkmark & \checkmark & \checkmark \\
    {GRASS} \citep{muhamed2024grass} & &  & \checkmark & \checkmark \\
    {LISA} \citep{pan2024lisa} & &  & \checkmark & \checkmark \\
    {BlockLLM} \citep{ramesh2024blockllm}  & &  & \checkmark & \checkmark \\
    {LoQT} \citep{loeschcke2024loqt} & &  & \checkmark & \checkmark \\
    {BAdam} \citep{luo2024badam} & & & \checkmark & \checkmark \\
    {Adam-mini} \citep{zhang2024adam} & &  & \checkmark & \\
    {LDAdam} \citep{robert2024ldadam} & &  & \checkmark & \\
    {APOLLO} \citep{zhu2024apollo} & &  & \checkmark & \checkmark \\
    {FRUGAL} \citep{zmushko2024frugal} & &  & \checkmark & \checkmark \\
    {Tensor-Galore} \citep{george2025tensorgalore} & &  & \checkmark &  \\
    {H-Fac} \citep{nguyen2024h} & &  & \checkmark & \\
    {Galore+} \citep{liao2024galore} &  &  & \checkmark & \checkmark \\
    {SubTrack-Grad} \citep{rajabi2025subtrack} &  &  & \checkmark & \\
    {Online subspace descent} \citep{liang2024memory} &  &  & \checkmark & \\
    {Flora} \citep{hao2024flora} &  &  & \checkmark & \\
    {GoLore} \citep{he2024subspace} &  &  & \checkmark & \\
    {I3S} \citep{zhang2025i3s} &  &  & \checkmark & \\
    {4-bit Shampoo} \citep{wang20244} &  &  & \checkmark & \checkmark &  \\
    {AdaLoRA} \citep{zhang2023adalora} &  &  & \checkmark & \checkmark &  \\
    {CoLA} \citep{liu2025cola} & \checkmark & \checkmark & \checkmark & \checkmark &  \\
    {VeLoRA} \citep{miles2024velora} &  &  & \checkmark &\checkmark &  \\
    {CoLM} \citep{nguyen2024mini} &  &  &  & & Data reduction \\
    \bottomrule
  \end{tabular}
}
  \caption{\footnotesize Summary of related works. ``\checkmark'' indicates that the method reduces the memory in the corresponding category during pre-training. Note that gradient and activation checkpointing are not considered when making this table, since all the mentioned methods could save gradient and activation memory seamlessly by checkpointing. Besides, works on quantizing existing methods are marked ``Quantize'' in the comments}
  \label{table:related_works}
\end{table}

\subsection{Memory-efficient optimizers}
Innovation in optimizers is perhaps the most effective way to reduce the memory consumption of optimizer states and gradients, and it is perhaps the most popular on-going research direction. In this section, we review a number of such innovations. Note that we are not including commonly-used infra tricks such as gradient checkpointing. In fact, all the following methods can further save their gradient and optimizer state memory by checkpointing in practice.

\textbf{GaLore} \citep{zhao2024galore} is perhaps the most representative approach. It projects the gradients into a low-dimensional space to reduce memory consumption for both gradients and optimizer states. The subspace is determined by the singular value decomposition of the gradient momentum to maximally capture the full gradient information. 
{
To describe the algorithm, suppose we consider a simple problem of optimizing the loss function $\ell(W)$ with one weight matrix $W\in\RR^{m\times n}$ (without loss of generality assume $m\geq n$). GaLore proposes projecting the optimizer states of Adam and performing updates through the following iterations: 
(we present only the key steps, omitting some of the typical Adam steps)
\begin{equation}\label{eq:algo_random_proj_adam_matrix}
\begin{aligned}
& G_t \leftarrow \nabla \ell\left(W_{t}\right)\\
& R_t \leftarrow (P_t)^\top G_t, \\
& M_t \leftarrow \beta_{1 t} M_{t-1}+\left(1-\beta_{1 t}\right) R_t, \\
& V_t \leftarrow \beta_{2 t} V_{t-1}+\left(1-\beta_{2 t}\right) R_t^2, \\
& W_{t+1} \leftarrow W_{t}-\alpha_t P_t \left(\frac{M_{t}}{\sqrt{{V}_t}}\right),
\end{aligned}
\end{equation}
where $P_t\in\RR^{m\times r}$ is an orthogonal projection to $r$-dimensional space with $r\leq n$, obtained by conducting singular value decomposition (SVD) to the gradient matrix $[U,\Sigma, V] = \text{SVD}(G_t)$ and taking $P_t=U_{:,1:r}$. Note that the momenta $M_t$ and $V_t$ are now in low-dimensional space $\RR^{r\times n}$ so that the memory of the optimizer states can be saved substantially. 
}

\textbf{Fira} \citep{chen2024fira} achieves a full-rank gradient update by adding an extra orthogonal term to the projected gradient of GaLore, effectively re-introducing the full rank information and achieving better performance than GaLore.
{
Specifically,  the update rule of Fira is given by replacing the last line in \eqref{eq:algo_random_proj_adam_matrix} with the following update
\begin{equation}
    W_t \leftarrow W_{t-1}-\alpha_t P_t \left(\frac{M_{t}}{\sqrt{{V}_t}}\right) - \underbrace{\alpha_t \phi_t(R_t) (G_t - P_t R_t)}_{\text{Extra term}},
\end{equation}
where $\phi_t$ is a rescaling scalar function to balance the weight of the newly introduced term and the GaLore gradient term. The introduction of this extra term plays an important role since it re-introduces the gradient information outside of the subspace defined by $P_t$ (note that no extra memory is required). 
}

Next, we introduce a few other works that extends the above projection based ideas. \textbf{Natural Galore} \citep{das2024natural} combines preconditioning via natural gradient with the GaLore scheme \eqref{eq:algo_random_proj_adam_matrix}; \textbf{COAP} \citep{xiao2024coap} achieves better performance than Galore via a correlation-aware design on the projection matrix; \textbf{Gradient structured sparsification (GRASS)} \citep{muhamed2024grass} designs sparse projection matrices guided by the norm of the rows of the gradients, and achieves even faster pre-training with competitive performance to GaLore; \textbf{FRUGAL} \citep{zmushko2024frugal} further refines the subspace gradient update, as it does a state-full (such as Adam) step on the projection and an extra state-free (such as SGD) step on the orthogonal components.  This enables updates across the full parameter space and achieves better performance without incurring significantly more memory overhead than GaLore; \textbf{H-Fac} \citep{nguyen2024h} uses a rank-1 parameterization on optimizer states to update the weights in a memory efficient manner, inspired by a continuous Hamiltonian update scheme; \textbf{Tensor-Galore} \citep{george2025tensorgalore} extends GaLore to tensor variables to reduce the memory for all weight parameters than matrix variables only; \textbf{Galore+} \citep{liao2024galore} partially reduces the computational cost of SVD by performing it on only one randomly selected attention matrix (among all the heads in multi-head attention), leading to significant training acceleration at the expense of introducing slightly more error; \textbf{SubTrack-Grad} \citep{rajabi2025subtrack} and \textbf{Online subspace descent} \citep{liang2024memory} are two methods that update the subspace of GaLore via solving optimization problems (which approximate the current gradient in a least square sense); \textbf{Flora} \citep{hao2024flora} and \textbf{GoLore} \citep{he2024subspace} propose randomly projecting the gradients to a low-dimensional space to reduce memory usage, while still maintaining competitive performance compared to GaLore; \textbf{I3S} \citep{zhang2025i3s} dynamically adjusts GaLore’s subspace via importance sampling (weighted by the singular values) to achieve improved performance over GaLore. 

Beside all the aforementioned projection based method, substantial progresses have been made in block-wise type of optimizers (mostly based on Adam). \textbf{BAdam} \citep{luo2024badam} is a block coordinate descent framework that uses Adam. Specifically, {it divides the model parameters $W$ into $D$ blocks $W^{b_{1}}, \dots, W^{b_{D}}$ and updates one block at a time using Adam's update steps,  i.e., for the $i$th block we approximately solve the following problem using $K$ Adam steps
\begin{align*}
    W_{t+1}^{b_{i}} \leftarrow  \arg\min\limits_{W^{b_{i}}} \ell \left(W_{t+1}^{b_{1}}, \dots,W_{t+1}^{b_{i-1}},W^{b_{i}},W_{t}^{b_{i+1}}, \dots,W_{t}^{b_{D}}\right).
\end{align*}%
}%
At the end of each block update, it resets the gradient and optimization states, reducing that way the memory requirements. \textbf{Adam-mini} \citep{zhang2024adam} achieves memory savings by reducing the number of learning rates, as a different rate is typically required for each parameter. First, it partitions the model parameters into blocks by leveraging the block-diagonal structure of the problem's Hessian. Then, it assigns a single learning rate to each parameter block by substituting the quantity {$G^{b}\odot G^{b}$ (where $G^{b}$ is the gradient of the loss over block $b$) by its average value within that block, i.e.,  
\begin{align*}
    W^{b} \leftarrow (1-\beta_{2})\cdot\text{mean}(G^{b}\odot G^{b}) + \beta_{2}\cdot W^{b}
\end{align*}
where $W^{b}$ is the second raw moment estimate for the block $b$ and $\beta_{2}$ is the exponential decay rate.
}%
\textbf{BlockLLM} \citep{ramesh2024blockllm} also achieves memory efficiency by selecting a small subset of training parameter for Adam update, where the selection criterion is measuring the norm of the gradients for each weight matrix; \textbf{LISA} \citep{pan2024lisa} is a similar block-wise update and is inspired by the observation of a skewed distribution of weight norms in LoRA updates across various layers. To simulate LoRA, LISA approach randomly (with importance sampling) freezes most of the middle layers during optimization, while updating the remaining layers using the Adam optimizer; \textbf{LDAdam} \citep{robert2024ldadam} is an adaptive optimizer that reduces memory requirements by using low-rank compression for optimizer states. The key idea of the algorithm is to optimize within lower-dimensional subspaces but still explore the entire space of parameters during training.
Leveraging the redundancy in AdamW’s learning rate adaptation rule, \textbf{APOLLO} \citep{zhu2024apollo} restructures its element-wise learning rate updates into a channel- or tensor-wise format, where each channel/tensor shares the same scaling factor. APOLLO employs a memory-efficient approximation of these scaling factors via an auxiliary optimizer state that relies only on low-dimensional gradient information, resulting in significant memory savings.

\subsection{Weight Factorizations}
Now we turn to the second category of the discussion, which are the methods that reduce memory of weights and activations. We again do not consider activation checkpointing as it could be applied to all of the following methods seamlessly.

We start our review from low-rank idea derived from LoRA \citep{hu2022lora}. In LoRA, each of the weight matrix $W\in\RR^{m\times n}$ (which could be the attention matrix $Q$, $K$ and $V$) is re-parameterized as
\begin{equation*}
    W = W_0+BA
\end{equation*}
where $B\in \RR^{m\times d}$ and $A\in\RR^{d\times n}$ with $r\ll\min\{m,n\}$. For fine-tuning, $W_0$ is the weight of the pre-trained model; For pre-training, the pre-trained model $W_0$ is no longer available. In this case, one could replace $W_0$ with the zero matrix, in which case the method  will be called the \textbf{low-rank} approach  (see for example \cite{kamalakara2022exploring,lv2024scalable}); On the other hand, one can also replace $W_0$ with a randomly initialized matrix, in which case the method will still be referred to as the \textbf{LoRA} approach. Note that the low-rank approach where $W=BA$, parameterizes the given weight matrix completely within a low-rank space (with only $(m+n)r$ parameters), is the most natural idea for reducing the number of training parameters. However it is observed that this plain low-rank approach cannot match the performance of the full rank model~\citep{zhao2024galore}. A plausible explanation is that low-rank models do not have the expressiveness of full rank models. 

Improving LoRA to achieve comparable performance as its full-rank counterpart is thus an active research direction. \textbf{ReLoRA} \citep{lialin2024relora} is proposed to maintain the full rank of the model while still performing updates in a low-rank manner. Specifically, suppose we initialize the weights as $W=B_1A_1$ and train $B_1$ and $A_1$ for certain iterations. \cite{lialin2024relora} proposes to fix $B_1$ and $A_1$ and add new $B_2$ and $A_2$ to continue the training. After several such steps the training model becomes
\begin{equation*}
    W=W_0+B_{k}A_{k},\ W_0=B_1A_1+B_2A_2+\cdots+B_{k-1}A_{k-1}
\end{equation*}
where $W_0$ is already frozen. Therefore, ReLoRA gradually increases the rank of $W$, eventually reaching full rank; Partially inspired by ReLoRA, \textbf{SwitchLoRA} \citep{zhou2024revolutionizing} proposes pre-training with LoRA but periodically switches the columns/rows of the $B$/$A$ matrices, respectively, to achieve a higher rank update; Instead of parameterizing the update via low-rank matrices that still need to store a full-rank matrix, \textbf{SLTrain} \citep{han2024sltrain} undertakes a different route by directly parameterizing the weight with a sparse plus a low-rank matrix, decomposing the weight matrix in the following matter  
\begin{align}
    W=BA+S, \quad \mbox{\rm with $S$ being a sparse matrix}
\end{align}
 where we define $\delta:=nnz(S)/(mn)$ as the ratio of the non-zero entries. This low-rank and sparse decomposition allows one to approximate a full-rank matrix with a smaller number of parameters while being able to reduce the memory {utilized for weight, gradients and optimizer states} during pre-training. {SLTrain significantly boosts the model performance to match the full-rank training when adding a $10\%$ sparse matrix $S$ to the model, while still maintaining parameter and memory efficiency.} 

More works have proposed to consider alternative parameterization to further improve the pre-training efficiency. \textbf{LoQT} \citep{loeschcke2024loqt} similar to LORA, initializes two low-rank factors, $A$ and $B$, for each weight matrix $W$. However, only the matrix $B$ is trainable, and as $W$ and $A$ are not updated, they are quantized to optimize memory usage. Periodically, the product $AB$ is added back into the full-rank matrix $W$, i.e., 
$$W \leftarrow W+AB,$$
and the matrices $A$ and $B$ are reinitialized. \textbf{CoLA} \citep{liu2025cola} leverages the low-rank structure of a model's activations by factorizing the weight matrix $W$ as $AB$, and instead of applying the activation function on the matrix $AB$, it inserts it between the matrix factors. That is, the equation for an arbitrary layer takes the form $$h=A\sigma(Bx),$$ where $\sigma$ is a nonlinear activation. \textbf{AdaLoRA} \citep{zhang2023adalora} performs low-rank updates similar to LoRA but dynamically adjusts the rank of the update matrices based on their importance scores. Specifically, it parametrizes the update matrix $W_{0}$ as $W_{0}=P \Lambda Q$ to simulate the SVD and uses an importance metric to prune redundant singular values. Moreover, \citet{wei2024building} explored various structured parameterizations on the feedforward layers, such as low-rank and block-diagonal-diagonal matrices, along with a self-guided training scheme for effective and efficient training. \textbf{BTT} \citep{qiu2024compute} proposed a family of structured matrices through Kronecker product and investigated the use of different initialization and learning rate schedules. \textbf{CoMERA} \citep{yang2024comera} reshaped the weights into high-order tensors, which can be compactly represented using smaller tensor products; \cite{wang2024lora} have explored the use of a low-rank parameterization for pre-training and found 
that low-rank models are often not effectively optimized, leading to inefficiencies in their representation. Then, \textbf{LORO} \citep{mo2025parameter} proposed an approach that directly optimizes over the low-rank manifold in order to remove the redundancy during the pre-training;  

\subsection{Compression, Quantization and More}

Besides optimizer and model structure innovations, various work reduce the memory in model pre-training by infrastructure tricks such as compression, quantization, etc. {We stress such methods are not the focus of this paper, however for completeness we provide a brief overview of the related literature.}

Several works focus on compressing the information of the neural networks for memory efficient pre-training. \textbf{NeuZip} \citep{hao2024neuzip} proposes a weight compression method that leverages the entropy pattern of floating-point numbers in neural networks. It exploits the fact that the exponent bits of the weights have low entropy and therefore compresses them using a lossless compression algorithm, reducing memory consumption without sacrificing precision. During the forward pass and weight updates, the weight matrices of each layer remain in their compressed form until they are needed for computations, {i.e.,
the forward pass in the $k$ linear layer takes the following form
\begin{equation*}
\begin{aligned}
& \hat{W} \leftarrow \text{decompress} (c_{k})\\
& x_{k} \leftarrow \hat{W}x_{k-1} + b_{k}, 
\end{aligned}
\end{equation*}
where $c_{k}$ is the compressed form of the weight matrix $W_{k}$, $b_{k}$ is the bias and $x_{k-1}$ is the input to the $k$ layer.
}
\textbf{CompAct} \citep{shamshoum2024compact} introduces a method to reduce peak GPU memory usage by compressing a significant portion of the compute graph. Specifically, during the forward pass, CompAct compresses the activations using random projections and stores them. The compressed activations are then used in the backward pass to compute the gradients, ensuring that both the gradients and optimizer states remain low-rank. Finally, the compressed gradient is projected back to the original subspace during the weight update stage;
\textbf{VeLoRA} \citep{miles2024velora} is based on the observation that intermediate vectors generated during a forward pass (to be later used for gradient computation) can be heavily compressed without any performance loss. During the forward pass, VeLoRA splits tokens onto smaller sub-tokens and compresses them into a one-dimensional subspace. In the backward pass, these tokens are approximately reconstructed before applying the update rules.

\cite{chitsaz2024exploring} studies the effectiveness of linear quantization during pre-training. {Specifically, this study considers a linear quanitzation scheme that quantizes a vector $x$ in the following way:
\begin{equation*}
\begin{aligned}
& x_{\text{quant}} \leftarrow \text{clip}\left(\left\lfloor \frac{x}{s}\right\rceil -z ; N,P\right) \\
&\hat{x} = s(x_{\text{quant}}+z), 
\end{aligned}
\end{equation*}
where $\left\lfloor \cdot \right\rceil$ is the rounding operator, $s$ is the scaling factor, $N=-2^{b-1}, P=2^{b-1}-1$ describe the quantization range, $b$ is the bit width and $z$ is the offset.
}%
The authors observe memory savings without performance degradation when using 8-bit quantization for weights and activations. On the other hand, when quantization is applied to gradients or precision is reduced to 4 bits, either training instabilities arise or performance deteriorates. 
There are more works on the quantization of existing optimizers. \textbf{Q-Galore} \citep{zhang2024q} which explores more memory efficient GaLore via quantization; \textbf{4-bit Shampoo} \citep{wang20244} develops quantization methods for the preconditioners of second-order optimizer named Shampoo. {Finally, we emphasize that in the works mentioned above, the role of quantization is to reduce memory consumption during the pretraining phase. Therefore, these methods differ from quantization-based approaches designed to enhance performance during inference \citep{wei2025roste,xiao2023smoothquant}.}

Finally, another direction for memory efficient pre-training is to effectively reducing the amount of training data batch size. \textbf{CoLM} \citep{nguyen2024mini} proposes identifying small mini-batch coresets that mimic training with larger mini-batches to reduce high GPU memory demands. The key idea is to design coresets that capture the gradients of large random batches.

\section{Benchmarking {parameter and memory-efficient pre-training methods}}

Despite extensive works in this field, {to the best of our knowledge, there does not exist a comprehensive and thorough comparison among different algorithms. 
}
Therefore, {the core objective of this section is to develop a comprehensive benchmark of the most important parameter- and memory-efficient methods for LLM pre-training.}

\subsection{Methodology and benchmark settings}

{This section describes our benchmark's methodology and setting. We will focus on benchmarking representative methods that fall into either the category of memory-efficient optimizers or weight factorization. Note that again quantization and compression related approaches are not the focus of this work and we do not benchmark these approaches.
} 

\noindent\textbf{Memory-efficient Optimizers}. As discussed in the previous section, GaLore is pioneering the research on memory-efficient optimizers. 
{However, in general the methods that are} saving optimizer states will save the memory at the training/backward stage and not in the inference/forward stage. Besides, the projection of the gradient will naturally result in a loss of gradient information and (presumably) a worse convergence result. Surprisingly, many works in this direction, including Fira~\citep{chen2024fira} and Apollo~\citep{zhu2024apollo}, achieve better training results than the standard full-rank optimizer. 
{As mentioned in the previous section, Fira adds an additional orthogonal term in GaLore's projected gradient to restore the full-rankness of the update.
}
{Therefore,} we believe that the full-rankness is key for achieving a better performance for gradient-projection-related memory-efficient optimizers, and we thus include GaLore and Fira as our benchmarks.

\noindent\textbf{Weight Factorization}. As shown in Table \ref{table:related_works}, weight factorization approaches save memory in weights, gradients, and optimizer states. However, a fundamental question is whether or not low-rank factorization based methods (which is the starting point of all these works), such as directly decomposing the weight matrices as $W=BA$ or $W=W_0+BA$ (as in LoRA) can be used for pre-training. Due to a smaller number of parameters and the low-rank structure of the factorization, the capabilities of those approaches are limited. We therefore also include a method that restores the full rank of the weight matrices while still maintaining a small number of parameters compared to the full model, namely SLTrain~\citep{han2024sltrain}. Overall, we include low-rank, LoRA and SLTrain as our benchmarks. 

\noindent\textbf{Base performance of full-rank model pre-training}. Interestingly, in works such as \cite{chen2024fira,mo2025parameter}, the full model training is not outperforming the corresponding proposed parameter-efficient methods. We believe that this is due to the improper training of the baseline. In particular, methods such as SPAM~\citep{huang2025spam} and Stable-SPAM~\citep{huang2025stablespam} show that the baseline full-rank training performance could improve significantly with techniques such as momentum reset and adaptive gradient clipping. Adaptive gradient clipping dynamically adjusts the clipping threshold by tracking the maximum gradient magnitude observed over time, rather than relying on a pre-defined fixed value. Momentum reset periodically sets the momenta of the AdamW optimizers to zero to erase the influence of the batches with spiked gradients. We compare the training loss and evaluation perplexity of the plain AdamW optimizer and the optimizer with aforementioned techniques in Figure \ref{fig:loss_spike}, where we observe a superior performance after adding these techniques. We also report the perplexity with Stable-SPAM as the baseline full-rank performance in Table \ref{main_table_llama}.

\begin{figure}[t]
    \begin{center}
    \subfigure[Training Loss]{
        \includegraphics[width=0.445\textwidth]{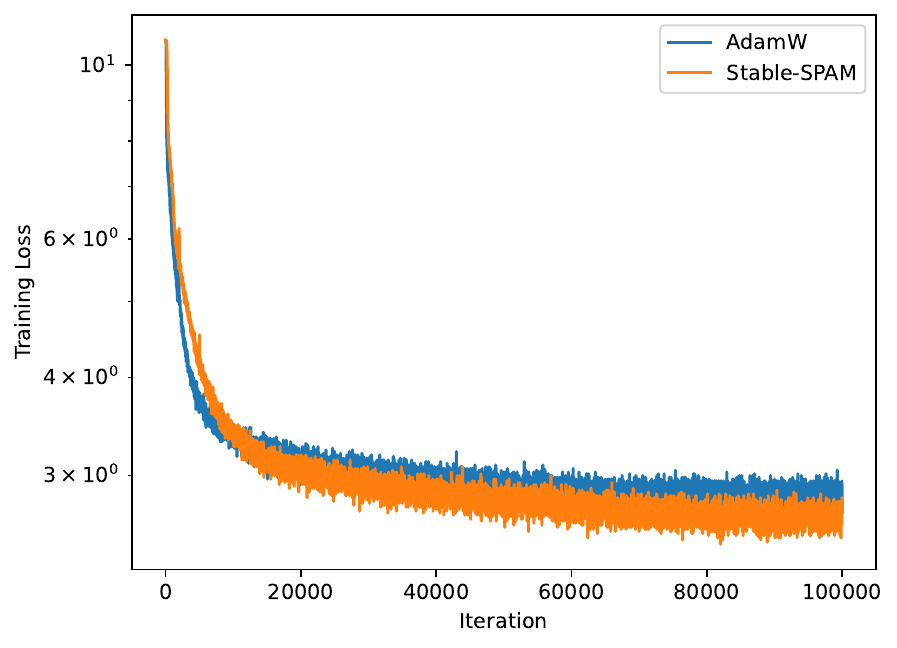}
    }
    \subfigure[Evaluation Perplexity]{
        \includegraphics[width=0.445\textwidth]{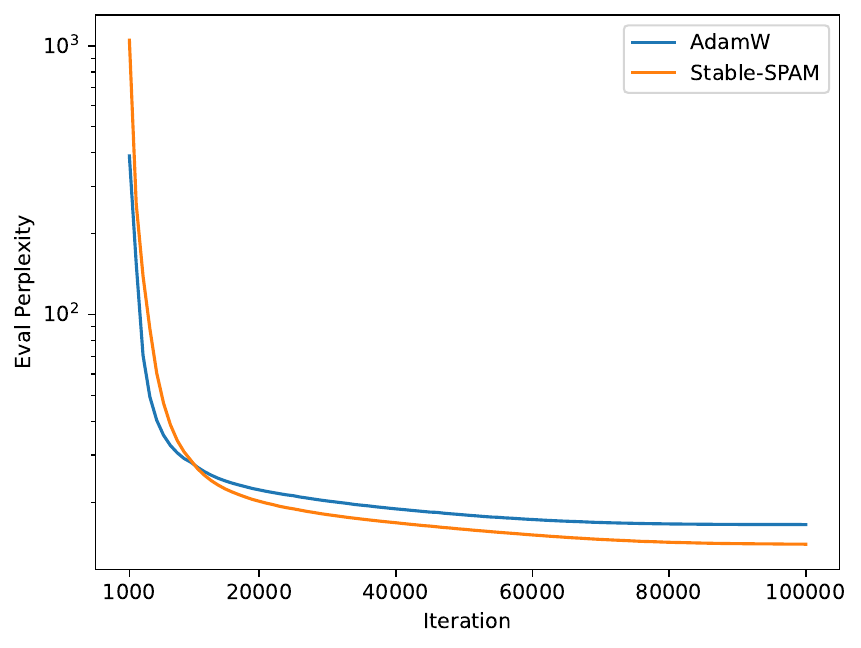}
    }
    \caption{Loss curve and evaluation perplexity before and after employing the momentum reset and adaptive gradient clipping, as suggested by \cite{huang2025stablespam}. 
    }
    \label{fig:loss_spike}
    \end{center}
\end{figure}

\noindent\textbf{Benchmark settings}. We follow the same experimental setting in both GaLore and SLTrain. Specifically, we consider LLaMA LLMs with pre-normalization, RMSnorm \citep{zhang2019root}, and SwiGLU activation \citep{shazeer2020glu}. 
We train LLaMA LLMs on C4 (Colossal Clean Crawled Corpus) dataset \citep{raffel2020exploring}, which is specially designed for pre-training. The training is performed without data repetition and we consider LLaMA with varying model sizes from 60M up to 1B parameters. 

As mentioned above, we include the standard \textbf{Full-rank}, \textbf{Low-rank}, \textbf{LoRA}, \textbf{SLTrain}, \textbf{GaLore} and \textbf{Fira} in our benchmark. We conducted a comprehensive hyperparameter search via wandb sweeps\footnote{See \url{https://docs.wandb.ai/guides/sweeps/} for details.} for all methods and report the best hyperparameter setting. We set the sequence length to 256 and the batch size to 512 for all experiments. For the full-rank method, we sweep the learning rates; For the low-rank methods LoRA, SLTrain, GaLore and Fira, we sweep the learning rates for different ranks.\footnote{For 60M model we tested ranks $r=32, 64, 128, 256$, while for larger models we tested $r=64, 128, 256$, as shown in Figure \ref{fig:sweep_results}.} 
{All methods are trained using the BF16 format.}
The corresponding memory consumption of each method tested is shown in Table \ref{table:memory_estimate} (which is also the data source of Figure \ref{fig:mem_category}).

\begin{table}
  \centering
\resizebox{0.95\textwidth}{!}{%
\begin{tabular}{c|cccc}
    \toprule
    Method & Weight & Activation & Optimizer & Gradient \\
    \midrule
    Full Model & $2mn$ & $2(m+n)b$ & $2*4mn$ & $4mn$ \\
    Low-rank & $2(mr+nr)$ & $2(m+n)b$ & $2*4(mr+nr)$ & $4(mr+nr)$   \\
    LoRA & $2(mn+mr+nr)$ & $2(m+n)b$ & $2*4(mr+nr)$ & $4(mr+nr)$  \\
    SLTrain \citep{han2024sltrain} & $2(mr+nr+\delta mn)$ & $2(m+n)b$ & $2*4(mr+nr+\delta mn)$ & $4(mr+nr+\delta mn)$ \\
    Galore \citep{zhao2024galore} & $2mn$ & $2(m+n)b$ & $4(mr+2nr)$ & $4mn$ \\
    Fira \citep{chen2024fira} & $2mn$ & $2(m+n)b$ & $4(mr+2nr+1)$ & $4mn$ \\    
    \bottomrule
  \end{tabular}
}
  \caption{\footnotesize Summary of memory (in bytes, 16-bits = 2-bytes) consumption for the tested methods, all using BF16 format and AdamW as the default optimizer. Here we assume that we have a weight matrix of dimension $m\times n$ (assume $m\leq n$), and an input batch size of $b$. For SLTrain, $\delta$ is the sparsity ratio. For the activation column, we do not assume the use of gradient and activation checkpointing.}
  \label{table:memory_estimate}
\end{table}

\noindent\textbf{Benchmark metrics}. Similarly to \cite{zhao2024galore,han2024sltrain}, we evaluate LLM pre-training performance using \textbf{perplexity} metric. Perplexity is mathematically defined as the exponential of the average log-likelihood of the predicted tokens in a sequence (i.e., the exponential of the validation loss). Hence, lower perplexity indicates more confident and accurate predictions of the next tokens. 

\subsection{Benchmarking Results}\label{sec:bench_res}

\begin{figure*}[!ht]
\begin{center}
\includegraphics[width=1\textwidth]{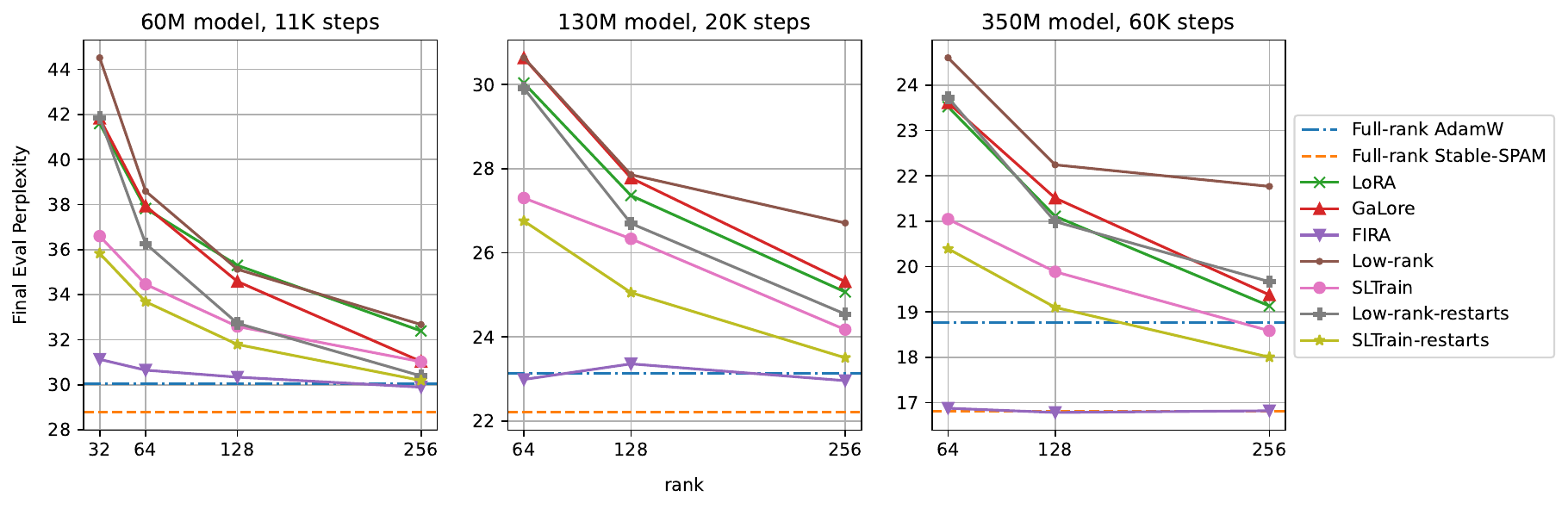}
\vspace{-10pt}
\caption{
Performance of 60M, 130M and 350M Llama models after hyperparameter search (Wandb sweep).  
}
\label{fig:sweep_results}
\end{center}
\end{figure*}

\begin{table}[t]
\centering
\renewcommand{\arraystretch}{1.3}
\setlength{\tabcolsep}{3pt}
\resizebox{0.95\textwidth}{!}{%
\begin{tabular}{l|ccc|ccc|ccc|ccc}
\toprule
  & \multicolumn{3}{c}{\textbf{60M}} & \multicolumn{3}{c}{\textbf{130M}} & \multicolumn{3}{c}{\textbf{350M}} & \multicolumn{3}{c}{\textbf{1B}} \\
  \midrule
  {$r$~/~$d$} & \multicolumn{3}{c|}{128 / 512} & \multicolumn{3}{c|}{256 / 768} & \multicolumn{3}{c|}{256 / 1024} & \multicolumn{3}{c}{512 / 2048} \\ 
{Tokens } & \multicolumn{3}{c|}{1.4B} & \multicolumn{3}{c|}{2.6B} & \multicolumn{3}{c|}{7.8B} & \multicolumn{3}{c}{13.1B} \\ 
\midrule 
& PPL & Param & Mem & PPL & Param & Mem & PPL & Param & Mem & PPL & Param & Mem \\
\midrule
\rowcolor{blue!15} {Full-Rank} (Adam) & 30.05 & 58 & 0.35  & 23.13 & 134 & 0.81 & 18.76 & 368 & 2.21 & 16.52 & 1339 & 8.04 \\ 
\rowcolor{blue!15} {Full-Rank} (Stable-SPAM) & 28.77 & 58 & 0.35  & 22.20 & 134 & 0.81 &  16.80 & 368 & 2.21 & 13.97 & 1339 & 8.04 \\ 
\midrule
\rowcolor{gray!15} {GaLore} \citep{zhao2024galore} & 34.58 & 58  & 0.28 & 25.31 & 134  & 0.61 & 19.37 & 368 & 1.59  & 15.57 & 1339 & 4.76 \\ 
\rowcolor{gray!15}{LoRA} 
& 35.30 & \bf{43} & 0.36  & 25.07 & \bf{94} & 0.84 & 19.13  & \bf{185} & 1.85  & 15.83 & \bf{609} & 6.34  \\ 
\rowcolor{gray!15}{Fira} \citep{chen2024fira} & \bf{30.34} & 58  & 0.28 & \bf{22.96} & 134  & 0.61 & \bf{16.82} & 368 & 1.59  & 15.10 & 1339 & 4.76 \\
\rowcolor{gray!15}{Low-Rank}   
& 35.13 & \bf{43} & \bf{0.24}  & 26.71 & \bf{94} & \bf{0.57} & 21.77  & \bf{185} & \bf{1.11}  & 18.22 & \bf{609} &\bf{3.66}  \\ 
\rowcolor{gray!15} {SLTrain} \citep{han2024sltrain} & 32.58 & 47 & 0.30 & 24.17 & 104 & 0.67 & 18.59 & 215 & 1.54 & 15.40 & 732 & 5.33 \\
\hline 
\rowcolor{green!15} {Low-Rank-restarts}    
& 32.73 & \bf{43} & \bf{0.24}  & 24.54 & \bf{94} & \bf{0.57} & 19.67 & \bf{185} & \bf{1.11}  & 15.01 & \bf{609} &\bf{3.66}  \\ 
\rowcolor{green!15} {SLTrain-restarts} & 31.79 & 47 & 0.30 & 23.50 & 104 & 0.67 & 18.00 & 215 & 1.54 & \bf{14.37} & 732 & 5.33 \\
\bottomrule
\end{tabular}
}
\caption{\footnotesize Validation perplexity (PPL($\downarrow$)), number of parameters in millions (Param), and estimated total memory cost in G (Mem). The last two methods are introduced in Section \ref{sec:innovation}. For full-rank, we report the best perplexity trainer by Stable-SPAM. For SLTrain, we fix $\delta = 0.1$. For the memory estimate, we follow \cite{han2024sltrain} and only include the memory of weights and optimizer states. See Appendix \ref{appendix:setting} for detailed hyperparameter settings of all experiments (We could not get a 14.31 perplexity for Fira 1B experiment as suggested in their paper \cite{chen2024fira}, also a 18.95 perplexity for GaLore 350M experiment as suggested in \cite{zhao2024galore}. The main cause of the former is that when we set the learning rate same as suggested in \cite{chen2024fira}, the training is not stable and diverges. The number we reported here are with a smaller learning rate.). The best-performing methods, excluding full-rank approaches (which naturally contain the optimal method in terms of PPL), are highlighted for each metric and model size.
}\label{main_table_llama} 
\end{table}

\begin{figure*}[!ht]
\begin{center}
    \includegraphics[width=0.6\textwidth]{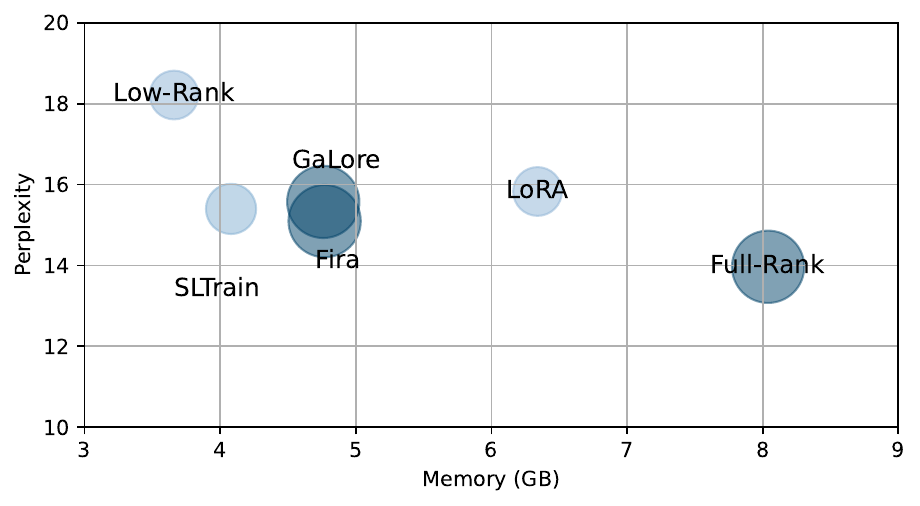}
\caption{\footnotesize Perplexity, memory, and parameter size for pretraining LLaMA 1B on the C4 dataset with different methods. The radius and color of each circle scale with parameter size. Overall, the methods which have smaller, lighter circles on the left bottom corner are desirable for pretraining.
}\label{fig:bubble_plot}
\end{center}
\end{figure*}

In this section, we present the benchmark results and discuss the main takeaways from the experiments. Figure \ref{fig:sweep_results} contains the hyperparameter search results for the 60M, 130M and the 350M models\footnote{Due to limited computational resources, we did not conduct a comprehensive hyperparameter search for the 1B model. Instead, we tested a few settings and report the optimal configuration.}. Several key results are summarized as follows.

\underline{First}, 
the \textit{full-rank model still delivers the best performance} and should be used as the baseline, provided that an appropriate optimizer and well-tuned hyperparameters are used, as in the case of Stable-SPAM~\citep{huang2025stablespam} method where gradient normalization and momentum reset techniques are implemented.

\underline{Second}, \textit{low-rank method achieves reasonably good performance}. In contrast to previous understanding  \citep{zhao2024galore}, where the low-rank method fails in pre-training (e.g. the perplexity is $142.53$ for the 1B models in \cite{zhao2024galore}), we achieve a perplexity of $18.22$ for the 1B model\footnote{We hypothesize that GaLore~\citep{zhao2024galore} initializes the low-rank matrix $B$ to zero which causes vanishing gradients and, consequently, the poor performance of the low-rank method. In this work we initialize $B$ with Kaiming Uniform \citep{he2015delving}, same as $A$.}. Plain low-rank model works particularly well for small models (60M, 130M), while the performance becomes less significant compared to other methods for larger models (350M, 1B). The performance drop in larger models may stem from unstable training dynamics when applying low-rank methods to larger base models~\citep{zhao2024galore,lialin2024relora}. 

\underline{Third}, \textit{restoring full-rankness will significantly boost the performance} of parameter-efficient pre-training, for both low-rank optimizers and low-rank weights. In particular, SLTrain yields better performance than the low-rank approach, and Fira yields better performance than GaLore, across all model sizes and rank settings. This indicates that incorporating high-rank updates is key to improving the performance of low-rank approaches.

\underline{Fourth}, \textit{memory-efficient optimizers and weight factorizations yield comparable performance}. For example SLTrain, GaLore and Fira achieve a perplexity of $15.40$, $15.57$ and $15.10$, respectively, for Llama 1B models (although none of them could match the $13.97$ performance of the full-rank model). We remark here that the performance of weight factorization methods (low-rank and SLTrain) could be further improved using the techniques we develop in Section \ref{sec:innovation}. Specifically, the low-rank and SLTrain methods can achieve a perplexity of $15.01$ and $14.37$, respectively, for the same 1B model, after the incorporation of the two proposed techniques (see ``Low-Rank-Restarts'' and ``SLTrain-Restarts'' in Table \ref{main_table_llama}).

\noindent\textbf{A scaling result for weight factorization methods}. In view of the hyperparameter search results in Figure \ref{fig:sweep_results}, we include a scaling law type result for full-rank (including all memory-efficient optimizers), low-rank and SLTrain methods in Figure \ref{fig:scaling}, where we plot the training curve in FLOPs (see Appendix \ref{appendix:setting} for the calculation of FLOPs) for each of the methods and different model sizes. 

From Figure \ref{fig:scaling}, we obtain the following conclusions: \underline{First}, all methods scale with respect to compute (in FLOPs) inverse proportionally (since both x and y-axis are in log scale); \underline{Second}, for different memory-efficient optimizers in full-rank model, their performance scales very differently (in terms of slopes), where GaLore appears to scale less efficiently due to the loss of the gradient information during the projection steps; \underline{Third}, the four categories (full-rank, low-rank, LoRA and SLTrain) scale similarly (in terms of slopes), indicating a scaling law type of observation: the final perplexity of the model primarily depends on the number of FLOPs, and is relatively independent of the specific model structures (full-rank or low-rank)  that we are using. 

\begin{figure*}[!ht]
\begin{center}
    \subfigure[Full-rank]{
        \includegraphics[width=0.445\textwidth]{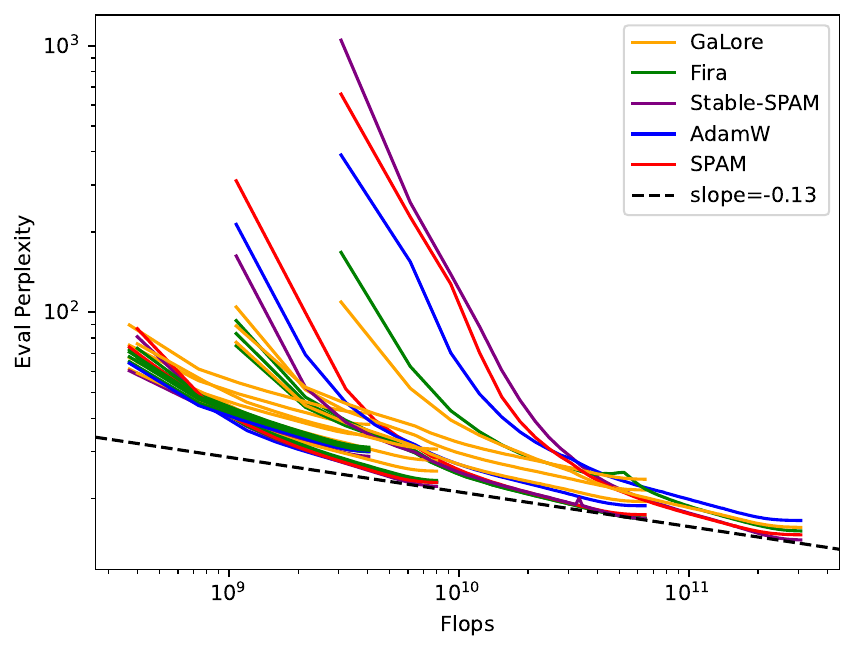}
    }
    \subfigure[Low-rank (AdamW)]{
        \includegraphics[width=0.445\textwidth]{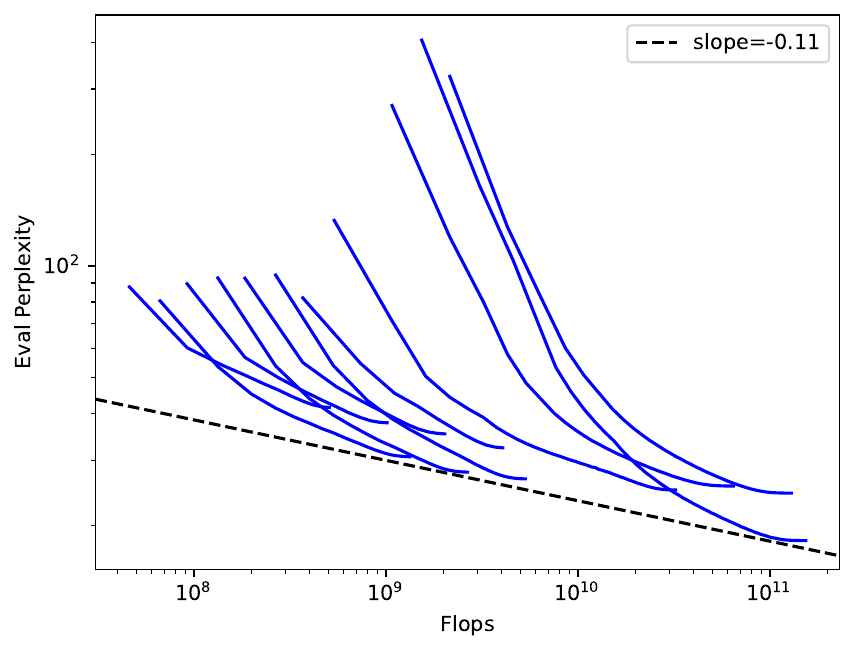}
    }
    \subfigure[LoRA (AdamW)]{
        \includegraphics[width=0.445\textwidth]{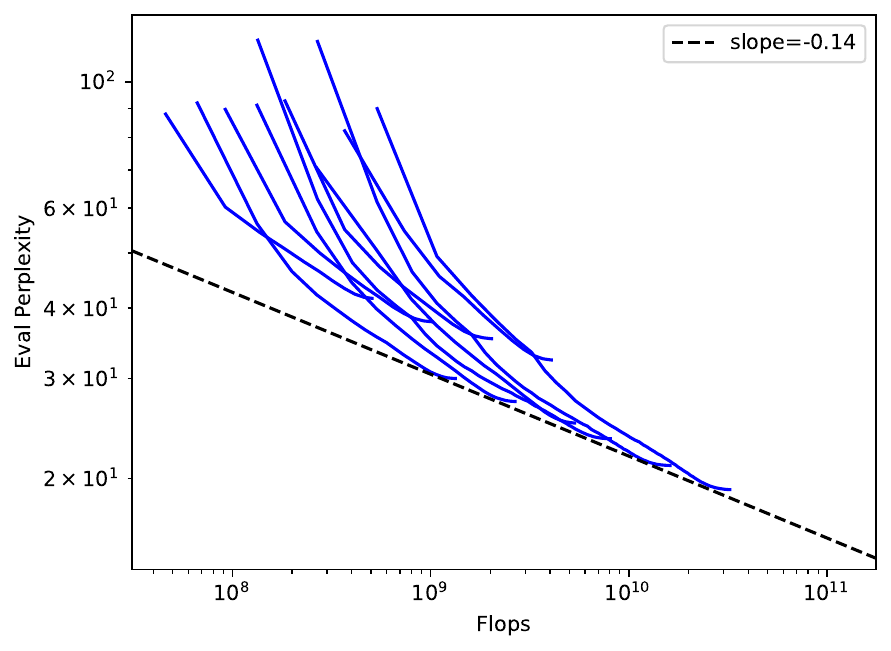}
    }
    \subfigure[SLTrain (AdamW)]{
        \includegraphics[width=0.445\textwidth]{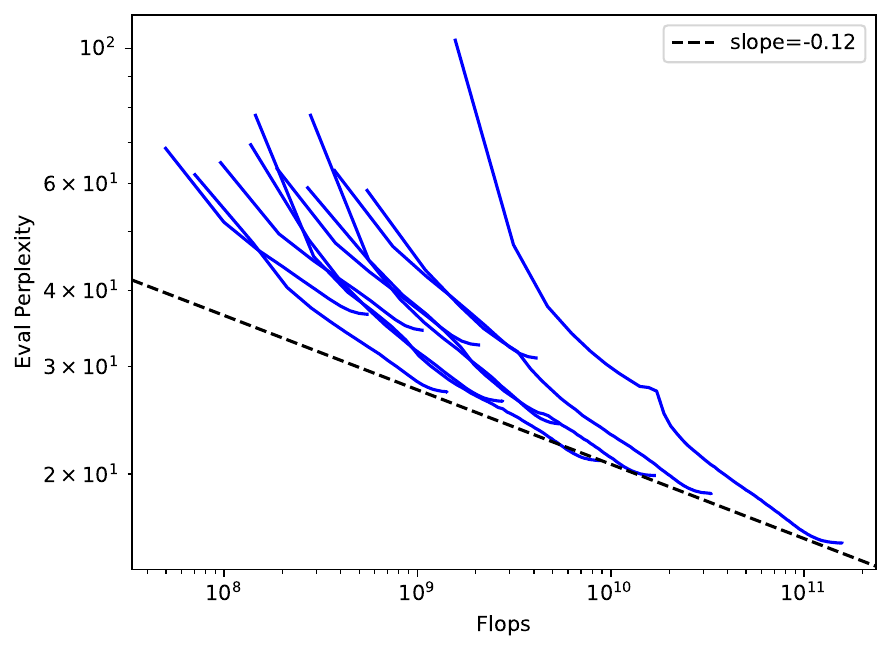}
    }
\caption{\footnotesize Scaling performance (in terms of evaluation perplexity) of different model structures with respect to compute (FLOPs). For the x-axis, the FLOPs are multiples of $10^{18}$. {Here we include all the training trajectories (with different the hyperparameter settings) to construct such scaling law.} 
}\label{fig:scaling}
\end{center}
\end{figure*}

\section{New techniques for parameter-efficient pretraining}\label{sec:innovation}

So far we have presented the main benchmark results and provided several key observations regarding efficient pre-training methods of both categories, namely memory-efficient optimizers and weight factorizations. One major conclusion from the previous section is that \textit{full-rank training with a better optimizer still outperforms all low-rank-based methods via weight factorization}. In this section, we discuss several key innovations for low-rank-type weight factorizations that aim to improve performance while still saving more memory than most memory-efficient optimizers (see Figure \ref{fig:mem_category} and Table \ref{table:related_works}).

\subsection{Weight Refactorization}
The first method we propose is the \textbf{weight refactorization}, a technique designed to improve the performance of weight factorization-based approaches. Consider training the LLM via the low-rank factorization, $W:=BA$, where we try to optimize the following problem 
\begin{equation*}
    \min_{A, B} \left\{ L(B, A) \coloneqq \ell(BA) \right\}
\end{equation*}
with $\ell$ being the loss function, $B\in\RR^{m\times r}$ and $A\in\RR^{r\times n}$. A natural observation is that, if $W=BA$ is fixed, then the loss function and the entire forward pass will remain the same, regardless of the specific value of $B$ and $A$. 
This allows us to refactor $W$ as $W=B'A'$ without changing model output, where in general $A' \neq A, B' \neq B$, to further benefit pre-training by promoting more efficient and stable training dynamics. In particular, let us denote the singular value decomposition of $W$ as $W=U\Sigma V^\top$. Then a natural refactorization can be obtained with the following assignment 
\begin{equation}\label{eq:weight_refactor}
    B'=U\sqrt{\Sigma}\text{ and }A'=\sqrt{\Sigma} V^\top.
\end{equation}

Now we examine the refactorization \eqref{eq:weight_refactor} theoretically, where we will show that the refactorization will lead to better conditioning for the algorithm. To this end, we examine the condition number of the Hessian $\nabla^2 L(B, A)$ at (local) optimality in the following lemma, whose proof is given in Appendix \ref{appendix:proof}. 

\begin{lemma}\label{lemma_refactor}
Let $B^* A^*$ be a local minimizer of $\ell$, i.e. $\nabla \ell (B^*A^*) = 0$ and $\nabla^2 \ell (B^* A^*) \succ 0$. Let $\underline{\lambda}, \overline{\lambda}$ be the smallest and largest eigenvalue of $\nabla^2 \ell (B^* A^*)$, and thus $\overline{\lambda} \geq \underline{\lambda} > 0$. Then we can show 
\begin{align*}
     \frac{\underline{\lambda} (\lambda_{\rm max}(A^{*\top} A^*) + \lambda_{\rm max}(B^* B^{*\top}))}{\overline{\lambda} (\lambda_{\rm min}(A^{*\top} A^*) + \lambda_{\rm min}(B^* B^{*\top}))} \leq \kappa\big( \nabla^2 L(B^*, A^*) \big) \leq  \frac{\overline{\lambda} (\lambda_{\rm max}(A^{*\top} A^*) + \lambda_{\rm max}(B^* B^{*\top}))}{\underline{\lambda} (\lambda_{\rm min}(A^{*\top} A^*) + \lambda_{\rm min}(B^* B^{*\top}))}.
\end{align*}
Let $B^* A^* = U \Sigma V^\top$ be the SVD and suppose $B^* = U \Sigma^{\alpha}$ and $A^* = \Sigma^{1-\alpha} V^\top$, then $\frac{ \lambda_{\rm max}(A^{*\top} A^*) + \lambda_{\rm max}(B^* B^{*\top})}{ \lambda_{\rm min}(A^{*\top} A^*) + \lambda_{\rm min}(B^* B^{*\top})}$ is minimized when $\alpha = 1/2$. 
\end{lemma}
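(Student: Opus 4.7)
The plan is to identify $\nabla^2 L(B^*, A^*)$ with a sandwich between $\nabla^2\ell(B^*A^*)$ and the Jacobian of the bilinear map $(B,A)\mapsto BA$, use a Kronecker decomposition to diagonalize that Jacobian, and finally optimize the resulting envelope in $\alpha$. First I would Taylor expand $L(B^*+tH_B, A^*+tH_A)$: since $(B^*+tH_B)(A^*+tH_A) = B^*A^* + t(H_B A^* + B^* H_A) + t^2 H_B H_A$ and $\nabla\ell(B^*A^*)=0$, both the order-$t$ term and the $\langle \nabla\ell,\, H_B H_A\rangle$ contribution at order $t^2$ vanish. What survives at second order is
$$Q(H_B, H_A) \;=\; \big\langle \Delta,\; \nabla^2\ell(B^*A^*)\,\Delta\big\rangle, \qquad \Delta := H_B A^* + B^* H_A,$$
so for every direction $(H_B, H_A)$ we obtain the pointwise sandwich $\underline{\lambda}\|\Delta\|_F^2 \le Q(H_B, H_A)\le \overline{\lambda}\|\Delta\|_F^2$.

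Next I would vectorize to make the geometry explicit. With $h = (\text{vec}(H_B)^\top, \text{vec}(H_A)^\top)^\top$ and $M = \big[A^{*\top}\otimes I_m,\ I_n\otimes B^*\big]$ we have $\text{vec}(\Delta) = Mh$, hence $\|\Delta\|_F^2 = h^\top M^\top M h$. The key identity is
$$MM^\top \;=\; (A^{*\top}A^*)\otimes I_m \;+\; I_n\otimes (B^*B^{*\top}),$$
and the two Kronecker summands commute because they share an identity factor on opposite sides; they are therefore simultaneously diagonalizable, so the eigenvalues of $MM^\top$ are the pairwise sums $\lambda_i(A^{*\top}A^*)+\lambda_j(B^*B^{*\top})$. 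In particular the top eigenvalue equals $\lambda_{\max}(A^{*\top}A^*)+\lambda_{\max}(B^*B^{*\top})$ and, on the orthogonal complement of $\ker M$, the smallest equals $\lambda_{\min}(A^{*\top}A^*)+\lambda_{\min}(B^*B^{*\top})$. Evaluating $Q$ on the extremal eigenvectors of $M^\top M$, together with the Rayleigh-quotient characterization of the spectrum of $\nabla^2 L(B^*, A^*)$, then yields $\underline{\lambda}\,\lambda_{\max}(M^\top M)\le \lambda_{\max}(\nabla^2 L)\le \overline{\lambda}\,\lambda_{\max}(M^\top M)$ and the analogous pair for $\lambda_{\min}$; dividing gives the two-sided envelope for $\kappa(\nabla^2 L(B^*, A^*))$ asserted in the lemma.

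For the minimization over $\alpha$, substituting $B^* = U\Sigma^\alpha$ and $A^* = \Sigma^{1-\alpha}V^\top$ makes the outer ratio equal to $f(\alpha) = (\sigma_1^{2(1-\alpha)}+\sigma_1^{2\alpha})/(\sigma_r^{2(1-\alpha)}+\sigma_r^{2\alpha})$. The symmetry $f(\alpha)=f(1-\alpha)$ immediately makes $\alpha=1/2$ a critical point; to promote this to a minimizer I would set $\beta = 2\alpha-1$ and use $\sigma^{1-\beta}+\sigma^{1+\beta} = 2\sigma\cosh(\beta\log\sigma)$ to rewrite
$$f(\alpha) \;=\; \frac{\sigma_1}{\sigma_r}\cdot \frac{\cosh(\beta\log\sigma_1)}{\cosh(\beta\log\sigma_r)},$$
and then bound the cosh-ratio from below by differentiating in $\beta$ and invoking monotonicity of $t\mapsto t\tanh(\beta t)$. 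The main obstacle I anticipate is precisely this last step: the cosh-ratio is cleanly $\ge 1$ when $|\log\sigma_1|\ge|\log\sigma_r|$ (which is, e.g., guaranteed by the normalization $\sigma_i\ge 1$ that is natural for well-conditioned pre-trained layers), so one has to either invoke such a normalization or interpret $\lambda_{\min}$ consistently as the smallest nonzero eigenvalue coming from the factorization rank so that the displayed form of $f(\alpha)$ holds. Once the commuting-Kronecker identity for $MM^\top$ is established, every other step is routine.
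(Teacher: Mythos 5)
Your proposal matches the paper's argument almost step for step: chain rule / Taylor expansion to write $\nabla^2 L(B^*,A^*)$ as a congruence of $\nabla^2\ell(B^*A^*)$ by the Jacobian of $(B,A)\mapsto BA$, vectorize with Kronecker products, and sandwich the resulting spectrum. The one substantive difference in the middle step is that the paper invokes a cited inequality (Merikoski) to bound the eigenvalues of $\mathrm{mat}(\nabla^2\ell)^{1/2}\,K\,\mathrm{mat}(\nabla^2\ell)^{1/2}$ with $K=(A^{*\top}A^*\otimes I)+(I\otimes B^*B^{*\top})$, whereas you diagonalize $K=MM^\top$ directly by observing that the two Kronecker summands commute (so its spectrum is the set of pairwise sums of eigenvalues) and then finish with Rayleigh quotients; both are valid, and your version is more self-contained. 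You also need to be explicit, as you began to be, that because $M$ has a nontrivial kernel (the gauge invariance of the factorization) the "condition number" must be read on $\ker(M)^\perp$, i.e.\ with $\lambda_{\min}$ the smallest \emph{nonzero} eigenvalue; the paper glosses over this.

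More importantly, you are right to flag the last step. The paper asserts without argument that
\[
f(\alpha)=\frac{\sigma_{\max}^{2\alpha}+\sigma_{\max}^{2-2\alpha}}{\sigma_{\min}^{2\alpha}+\sigma_{\min}^{2-2\alpha}}
\]
is minimized at $\alpha=1/2$. Using your $\cosh$ rewrite with $\beta=2\alpha-1$, one gets
\[
\left.\frac{d^2}{d\beta^2}\log f\right|_{\beta=0}=(\log\sigma_{\max})^2-(\log\sigma_{\min})^2=\log\!\tfrac{\sigma_{\max}}{\sigma_{\min}}\cdot\log(\sigma_{\max}\sigma_{\min}),
\]
which is negative whenever $\sigma_{\max}\sigma_{\min}<1$; in that regime $\alpha=1/2$ is a local \emph{maximum} of the ratio, not a minimum. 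A concrete counterexample: $\sigma_{\max}=2$, $\sigma_{\min}=1/3$ gives $f(1/2)=6$ while $f(0)=f(1)=4.5$. So the lemma's last sentence is only true under a normalization such as $\sigma_{\max}\sigma_{\min}\ge 1$ (your $\sigma_i\ge1$ suffices), which the paper silently assumes. Your proposal correctly identifies this and otherwise reproduces the intended proof.
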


Intuitively, Lemma \ref{lemma_refactor} suggests a balanced distribution of eigenvalues among $B, A$ as in \eqref{eq:weight_refactor} improves the conditioning of the Hessian at optimality. This translates to a faster local convergence according to standard convergence guarantees \cite[Chapter 1]{bertsekas1997nonlinear} since the local rate directly depends on the local condition number.

In our experiments, we refactor the weights for low-rank and SLTrain every 200 update iterations, which consistently improves performance under our experimental settings.

\subsection{Momentum Reset}
The second innovation we examine is the \textbf{momentum reset}, which we also employed for the baseline full-rank training in Stable-SPAM~\citep{huang2025stablespam}. This technique periodically resets the momentum of the AdamW optimizer to zero. In this section, we provide a preliminary theoretical analysis of how this mechanism benefits training, an aspect not covered which is not provided in \cite{huang2025stablespam}. 

Consider the finite-sum full-rank model training:
\begin{equation}\label{eq:finite_sum}
    \min_{W}\ \ell(W)=\sum_{i=1}^{n}\ell_i(W).
\end{equation}
For simplicity, we will discuss the application of momentum reset to a simpler algorithm, namely stochastic gradient descent with momentum (SGD-M):
\begin{equation}\label{eq:sgdm}
    \begin{aligned}
    M^t & =\beta_t M^{t-1}+\nabla \ell_{i_t}\left(W^t\right), \\
    W^{t+1} & =W^t-\gamma_t M^t,
    \end{aligned}
\end{equation}
where $i_t$ is randomly picked in $[1,2,...,n]$ and $M^{-1}=0$. For update \eqref{eq:sgdm}, 
we have the following known result:
\begin{theorem}[Theorem 7.4 in \cite{garrigos2023handbook}]\label{thm:old_sgdm}
    Suppose that for all $i$, the loss $\ell_{i}$ in \eqref{eq:finite_sum} is convex, $L$-smooth and it holds that $\E_i\|\nabla \ell_i\|^2\leq G^2$. SGD-M (the update steps are given in \eqref{eq:sgdm}) with 
    \begin{equation*}
        \gamma_t=\frac{2\eta}{t+3},\ \beta_t=\frac{t}{t+2},\text{ with }\eta\leq\frac{1}{4L}
    \end{equation*}
    achieves the following convergence:
    \begin{equation*}
        \E[f(W^{T})-f^*]\leq \frac{\|W^0-W^*\|^2}{\eta (T+1)}+2\eta G^2.
    \end{equation*}
\end{theorem}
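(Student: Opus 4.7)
The plan is to exploit a standard \emph{iterate moving-average} (IMA) reformulation of SGD with momentum, tailored to the particular schedule $\gamma_t = 2\eta/(t+3)$, $\beta_t = t/(t+2)$, which converts \eqref{eq:sgdm} into an equivalent plain SGD recursion on an auxiliary sequence. Concretely, I would define $Z^t := \tfrac{t+2}{2} W^t - \tfrac{t}{2} W^{t-1}$ (with the convention $W^{-1} := W^0$, so $Z^0 = W^0$), and verify by direct substitution into \eqref{eq:sgdm} that
\begin{equation*}
    Z^{t+1} - Z^t \;=\; -\eta\, \nabla \ell_{i_t}(W^t),
\end{equation*}
together with the inverse relation $W^t = (1-\beta_t) Z^t + \beta_t W^{t-1}$. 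Iterating the latter expresses $W^t$ as a convex combination of $Z^0, \dots, Z^t$ and in particular yields the identity $Z^t - W^t = \tfrac{t}{2}(W^t - W^{t-1})$, which will be essential for controlling the momentum drift.

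With this reformulation in hand, I would run the standard SGD energy-decrease argument on $\|Z^t - W^*\|^2$. Expanding the square and taking conditional expectation gives
\begin{equation*}
    \E_t\|Z^{t+1} - W^*\|^2 \;\leq\; \|Z^t - W^*\|^2 \;-\; 2\eta \,\langle \nabla \ell(W^t), Z^t - W^*\rangle \;+\; \eta^2 G^2,
\end{equation*}
where the noise term is bounded by the second-moment assumption. I would then split the inner product as $\langle \nabla \ell(W^t), (W^t - W^*) + (Z^t - W^t)\rangle$: convexity bounds the first piece below by $\ell(W^t) - \ell(W^*)$, while substituting $Z^t - W^t = \tfrac{t}{2}(W^t - W^{t-1})$ and applying $L$-smoothness in the form $\ell(W^{t-1}) \geq \ell(W^t) + \langle \nabla \ell(W^t), W^{t-1} - W^t\rangle - \tfrac{L}{2}\|W^t - W^{t-1}\|^2$ expresses the second piece as $\tfrac{t}{2}(\ell(W^t) - \ell(W^{t-1}))$ plus a quadratic remainder of size $O(L t \,\|W^t - W^{t-1}\|^2)$.

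The final ingredient is a Lyapunov-style telescoping. I would introduce a quantity of the form $\Phi_t := \|Z^t - W^*\|^2 + \eta\, t\,(\ell(W^{t-1}) - \ell(W^*))$ (or a similar affine-in-$t$ variant) and show that the per-step inequality above implies
\begin{equation*}
    \E_t[\Phi_{t+1}] \;\leq\; \Phi_t \;-\; 2\eta\bigl(\ell(W^t) - \ell(W^*)\bigr) \;+\; \eta^2 G^2,
\end{equation*}
provided the $L$-smoothness remainder is absorbed; this is exactly where $\eta \leq 1/(4L)$ enters, as the threshold that makes the coefficient of $\|W^t - W^{t-1}\|^2$ non-positive after accounting for the factor $t$. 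Telescoping from $t=0$ to $T$ and dividing by $T+1$ then delivers the claimed bound $\E[\ell(W^T) - \ell^*] \leq \|W^0 - W^*\|^2 / (\eta(T+1)) + 2\eta G^2$.

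The hard part will be the bookkeeping of the weighted telescoping: the convexity contribution $-2\eta(\ell(W^t) - \ell(W^*))$ appears in ``value'' form, while the momentum drift contributes a discrete derivative $-\eta t \,(\ell(W^t) - \ell(W^{t-1}))$, and combining them cleanly forces the Lyapunov weights $c_t$ to be tuned precisely to the schedule of $\gamma_t$ and $\beta_t$. Verifying that the $L$-smoothness quadratic remainder can indeed be absorbed at the stated ceiling $\eta \leq 1/(4L)$---rather than a more restrictive one---is the technical heart of the argument; the rest is algebra dictated by the specific choice of schedules.
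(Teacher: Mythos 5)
The paper does not prove this statement itself—it cites Theorem~7.4 of \citet{garrigos2023handbook}—but the proof of Theorem~\ref{thm:sgdm_restart} in Appendix~\ref{appendix:proof} reuses exactly the construction you propose: the auxiliary iterate $Z^{t}=W^t+\tfrac{t}{2}(W^t-W^{t-1})$ (an index shift of your $Z^t=\tfrac{t+2}{2}W^t-\tfrac{t}{2}W^{t-1}$), the observation that $Z^{t+1}=Z^t-\eta\nabla\ell_{i_t}(W^t)$, and a weighted telescope in $\|Z^t-W^*\|^2$. So your high-level plan is the same one the paper relies on.

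There is, however, a genuine misstep in where $L$-smoothness and the constraint $\eta\le 1/(4L)$ enter. You propose to handle the momentum-drift piece $\tfrac{t}{2}\langle\nabla\ell(W^t),W^t-W^{t-1}\rangle$ via the inequality $\ell(W^{t-1})\ge\ell(W^t)+\langle\nabla\ell(W^t),W^{t-1}-W^t\rangle-\tfrac{L}{2}\|W^t-W^{t-1}\|^2$ and then use $\eta\le 1/(4L)$ to absorb the resulting $+\tfrac{\eta L t}{2}\|W^t-W^{t-1}\|^2$. This is not a smoothness inequality (smoothness gives the reverse direction); it is a \emph{weakening} of convexity, which already gives $\langle\nabla\ell(W^t),W^t-W^{t-1}\rangle\ge\ell(W^t)-\ell(W^{t-1})$ exactly. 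Introducing the spurious $-\tfrac{L}{2}\|\cdot\|^2$ slack creates a positive term growing linearly in $t$ with nothing to cancel it—$\eta\le 1/(4L)$ does not make its coefficient non-positive, and the telescope would not close. In the actual argument the inner-product split uses only convexity, yielding exactly $-\eta(t+2)\big(\ell(W^t)-\ell^*\big)+\eta t\big(\ell(W^{t-1})-\ell^*\big)$, while $L$-smoothness enters in bounding the stochastic-gradient second moment, $\E_t\|\nabla\ell_{i_t}(W^t)\|^2\le 4L\big(\ell(W^t)-\ell^*\big)+2G^2$; the step-size condition $4L\eta^2\le\eta$ then absorbs the $4L\eta^2\big(\ell(W^t)-\ell^*\big)$ into the convexity decrease, producing the weight $-\eta(t+1)$ needed for your Lyapunov function $\Phi_t=\|Z^t-W^*\|^2+\eta t\big(\ell(W^{t-1})-\ell^*\big)$ to telescope. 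This is also where the constant $2\eta G^2$ (rather than $\eta G^2$) in the final bound comes from; a direct bound $\E\|\nabla\ell_{i_t}\|^2\le G^2$, as your sketch implicitly uses, would give the wrong constant and would leave $\eta\le 1/(4L)$ with no role.
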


For SGD-M with momentum reset (SGD-M-R, Algorithm \ref{algo:sgdmr}), where we reset the momentum to zero after every $T$ iterations, we have the following new result (proof available in Section \ref{appendix:proof}):

\begin{algorithm}[t] 
\caption{SGD-M with momentum reset (SGD-M-R)} 
\begin{algorithmic}
\STATE {\bfseries Input:} Initialize weight $W^0$ and $t=0$

\FOR{$k=0,1,\ldots, K-1$}
\STATE Set $M^{t-1} = 0$ \\
\FOR{$j=0,1,...,T-1$}
    \STATE Sample $i_t$ from $i=1,2,...,n$ \\
    \STATE $M^{t}=\beta_t M^{t-1}+\nabla \ell_{i_t}\left(W^t\right)$ \\
    \STATE $W^{t+1}=W^t-\gamma_t M^t$ \\
    \STATE $t=t+1$
\ENDFOR
\ENDFOR
\end{algorithmic}
\label{algo:sgdmr}
\end{algorithm}

\begin{theorem}\label{thm:sgdm_restart}
    Suppose the assumptions of Theorem \ref{thm:old_sgdm} hold. Algorithm \ref{algo:sgdmr} with 
    \begin{equation*}
        \gamma_t=\frac{2\eta}{t+3},\ \beta_t=\frac{t}{t+2},\text{ with }\eta\leq\frac{1}{4L},
    \end{equation*}
    achieves the following convergence:
    \begin{equation*}
        \E[f(W^{KT}) - f^*]\leq \sum_{k=0}^{K-1}\frac{\|W^{kT} - W^*\|^2}{\eta(k+1) (T+1)} + \sum_{k=0}^{K-1}\frac{2\eta G^2}{k+1}.
    \end{equation*}
\end{theorem}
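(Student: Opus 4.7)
The proof naturally decomposes the $KT$ iterations of Algorithm \ref{algo:sgdmr} into $K$ epochs of length $T$. Because the inner loop resets $M^{t-1} \leftarrow 0$ at the start of every epoch, each epoch is an independent run of SGD-M initialized from the previous epoch's endpoint with zero momentum, which is exactly the initialization hypothesis of Theorem \ref{thm:old_sgdm}. This is the structural observation that makes the analysis possible.

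First, I would apply Theorem \ref{thm:old_sgdm} to each epoch, treating $W^{kT}$ as the initial point and running for $T$ steps with the ambient step size and momentum schedule $\gamma_t = 2\eta/(t+3)$, $\beta_t = t/(t+2)$. Since these are shared with the global counter, inside epoch $k$ (i.e., for local index $s \in \{0,\dots,T-1\}$) they read $\gamma_{kT+s} = 2\eta/(kT+s+3)$ and $\beta_{kT+s} = (kT+s)/(kT+s+2)$, so the effective step-size scale inside epoch $k$ is proportional to $\eta/(k+1)$ rather than $\eta$. The resulting per-epoch bound should therefore take the form
\begin{equation*}
\E[f(W^{(k+1)T}) - f^*] \;\leq\; \frac{\|W^{kT} - W^*\|^2}{\eta(k+1)(T+1)} + \frac{2\eta G^2}{k+1},
\end{equation*}
where the $1/(k+1)$ factor is a direct consequence of this global step size decay.

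Second, I would combine the $K$ per-epoch bounds. Summing them over $k = 0, \ldots, K-1$ produces exactly the right-hand side of the claim term by term; convexity of $f$ (applied to the sequence of epoch endpoints $\{W^{(k+1)T}\}$, or equivalently the fact that $\E[f(W^t)-f^*]$ is suitably controlled along the trajectory under the chosen schedule) lets one upper bound $\E[f(W^{KT}) - f^*]$ by the sum of the per-epoch suboptimality gaps, yielding the stated inequality.

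The main obstacle is the first step: re-deriving the Lyapunov descent inequality that underlies Theorem \ref{thm:old_sgdm} with the time-shifted schedules $\gamma_{kT+s}, \beta_{kT+s}$ above instead of their "fresh" counterparts that start at local index $s=0$. Concretely, one must track how the $kT$-shift propagates through the potential function used in the proof of Theorem \ref{thm:old_sgdm} --- most importantly through the sums $\sum_{s=0}^{T-1}\gamma_{kT+s}$ and $\sum_{s=0}^{T-1}\gamma_{kT+s}^2$, which are the quantities that ultimately determine the bias and variance terms of the bound --- and verify that their net effect is precisely a $1/(k+1)$ rescaling of the guarantee of Theorem \ref{thm:old_sgdm}. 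Admissibility of the step size, $\eta \leq 1/(4L)$, is preserved throughout because the effective per-epoch scale $\eta/(k+1)$ is monotonically non-increasing in $k$; no additional smallness condition on $\eta$ is needed.
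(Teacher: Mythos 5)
The decomposition into $K$ epochs and the appeal to the Garrigos--Gower Lyapunov analysis are the right structural instincts, and they do match the paper's approach. But the key step that you flag as ``the main obstacle'' --- that the time-shifted schedule inside epoch $k$ yields the clean per-epoch bound
\begin{equation*}
\E[f(W^{(k+1)T}) - f^*] \;\leq\; \frac{\|W^{kT} - W^*\|^2}{\eta(k+1)(T+1)} + \frac{2\eta G^2}{k+1}
\end{equation*}
--- is exactly where the argument breaks. This inequality does not hold, and there is a cheap sanity check that should have raised a flag: if it did hold, you could simply set $k = K-1$ and obtain the single-term bound $\E[f(W^{KT}) - f^*] \le \frac{\|W^{(K-1)T} - W^*\|^2}{\eta K (T+1)} + \frac{2\eta G^2}{K}$, which is strictly stronger than the theorem's statement and would make the summation over $k$ superfluous. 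The fact that the theorem is stated as a sum over epochs is itself evidence that the per-epoch bound cannot be self-contained.

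The reason is that resetting the momentum $M^{t}$ does \emph{not} reset the Lyapunov potential used in the proof of Theorem \ref{thm:old_sgdm}. Writing $N = kT$ and following Garrigos--Gower with the shifted schedule $\gamma_j = 2\eta/(N+j+3)$, $\beta_j = (N+j)/(N+j+2)$, the potential at the start of epoch $k$ is of the form $\|Z^{-1} - W^*\|^2 + 2\eta\lambda_0\,\E[f(W^{kT}) - f^*]$ with $\lambda_0 = N/2 = kT/2 \ne 0$ for $k \ge 1$. That nonzero $\lambda_0$ carries forward a \emph{positive} function-value term from the previous epoch, so the correct per-epoch inequality (after dividing through and simplifying) is
\begin{equation*}
\E[f(W^{(k+1)(T+1)}) - f^*] \;\leq\; \frac{\|W^{kT} - W^*\|^2}{\eta(k+1)(T+1)} + \frac{k}{k+1}\,\E[f(W^{k(T+1)}) - f^*] + \frac{2\eta G^2}{k+1},
\end{equation*}
i.e.\ a \emph{recursion}, not an independent bound. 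Summing this recursion over $k = 0,\dots,K-1$ and dropping non-negative terms is what produces the double sum in the theorem. Also note that your proposed final step (``convexity lets one upper bound $\E[f(W^{KT}) - f^*]$ by the sum of per-epoch gaps'') is not convexity at all --- it is just non-negativity of suboptimality gaps, and it is only needed because the per-epoch inequality genuinely carries a term from the previous epoch. So the proposal identifies the right frame but omits the essential recursive coupling between epochs, which is the mathematical content of the result.
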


In summary, running both SGD-M \eqref{eq:sgdm} and its momentum rest version (Algorithm \ref{algo:sgdmr}) for $KT$ iterations, their convergence behaviors can be characterized by the following:
\begin{align*}
    \E[f(W^{KT})-f^*]&\leq \frac{\|W^0-W^*\|^2}{\eta (KT + 1)}+2\eta G^2   & {\mbox{rate for SGD-M}}\\
    \E[f(W^{KT}) - f^*]&\leq \sum_{k=0}^{K-1}\frac{\|W^{kT} - W^*\|^2}{\eta(k+1) (T+1)} + \sum_{k=0}^{K-1}\frac{2\eta G^2}{k+1}\quad & {\mbox{rate for SGD-M-R}}.
\end{align*}

Note that if $\|W^{kT}-W^*\|/\|W^{0}-W^*\|\approx1$ remains a constant order, then the latter is converging slower than the former (by a logarithm factor). However if $\|W^{kT}-W^*\|/\|W^{0}-W^*\|=\Omega(\sqrt{1/(kT)})$, then the latter becomes 
\begin{align*}
    f(\hat{W})-f^*\leq \sum_{k=1}^{K}\frac{\|W^0-W^*\|^2}{\eta k^2 T^2}+2\eta G^2\log(K)
\end{align*}
which is significantly faster than the former, since usually $T$ is much larger than $K$ (the number of momentum resets) in practice.

In conclusion, if the loss landscape is mild (e.g. (strongly) convex or in an overparametrized regime so that the iterate is indeed converging to the optimal point), momentum resetting could boost the performance significantly. Analyzing the benefit of momentum reset for Adam is complicated and is therefore left as a direction for future work. In our experiments, we found that resetting the momentum every 200 updates yields the best performance, and we therefore adopt this choice of hyperparameter.

\subsection{Experiment Results}

The two techniques proposed above can be applied to the low-rank and SLTrain methods, resulting in novel training strategies. In fact, they can be implemented in a straightforward manner. Specifically, we can apply the refactorization technique to the low-rank matrix factors $B$ and $A$ present in both the low-rank and the SLTrain methods. Moreover, we can apply momentum reset to the optimizer involved in the low-rank and the SLTrain methods. 

The results are included in Figure \ref{fig:sweep_results} and Table \ref{main_table_llama}. The low-rank and the SLTrain methods with the addition of the two new techniques are referred to as ``Low-rank restarts'' and ``SLTrain-restarts'', respectively. From the figures and table we draw the conclusion that the two innovations greatly boost the performance of weight factorization methods. Notably, low-rank-restarts for 1B pre-training yields a comparable perplexity (15.013) to GaLore and Fira (15.573 and 15.098, respectively) when applied to the same model, while also reducing memory usage compared to these memory-efficient optimizers (i.e., GaLore and Fira).

\begin{figure}[!ht]
    \begin{center}
    \subfigure[60m]{
        \includegraphics[width=0.47\textwidth]{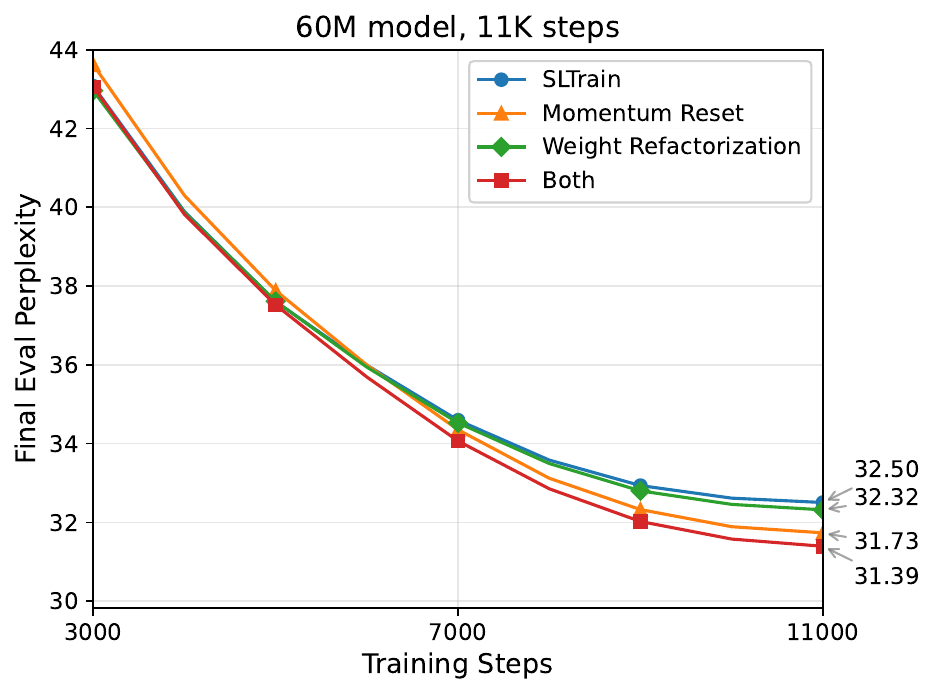}
    }
    \subfigure[130m]{
        \includegraphics[width=0.47\textwidth]{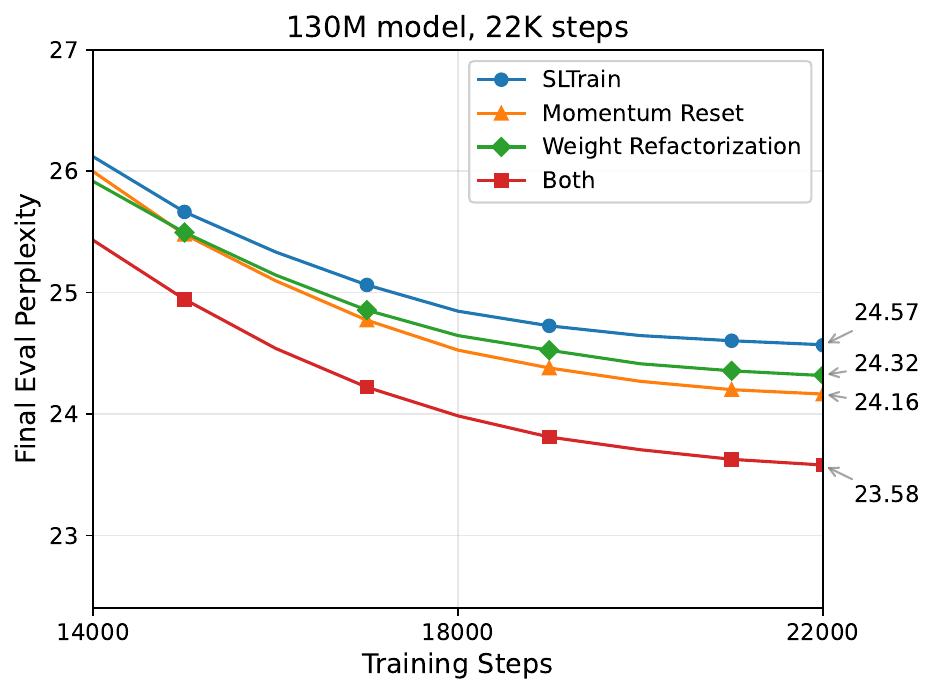}
    }
    \caption{\footnotesize Ablation study on the effect of the two proposed techniques. Note that here we use the same learning rate $0.003$ on both methods, differently than the case where a hyperparameter search is performed (e.g., in Table \ref{main_table_llama}). 
    }
    \label{fig:ablation}
    \end{center}
\end{figure}

\noindent\textbf{Ablation Study}. A natural question is the following: which of these two proposed techniques contributes more to the superior performance of weight factorization methods? To answer it we conduct an ablation study to explore the individual effects of weight refactorization and momentum reset. Specifically, we run SLTrain on the Llama 60M and 130M models, with and without the application of the two techniques, resulting in four evaluation curves depicted in Figure \ref{fig:ablation}. Figure \ref{fig:ablation} indicates that momentum reset contributes more towards a faster convergence, whereas weight refactorization could also steadily improve the performance of weight refactorization methods.

\section{Conclusions}
{In this work, we explored the potential of parameter- and memory-efficient methods, such as weight factorization and gradient compression, when used in LLM pre-training. Our primary motivation was to determine whether such methods can enhance pre-training efficiency while maintaining performance comparable to full-model training. Additionally, we aimed to identify strategies to further narrow the performance gap. To this end, we conducted a benchmark evaluation of several representative efficient pre-training approaches, which allowed us to identify the key factors that contribute to both efficiency and high performance in pre-training. We also proposed two practical techniques, weight refactorization and momentum reset, to further enhance the performance of efficient pre-training methods. In future work, we aim to expand our experiments to include more models, datasets, and tasks, as there are several important models and datasets that we did not consider. We also plan to explore additional efficient pre-training techniques.
}

\section{Acknowledgements}

The authors thank Prof. Caiwen Ding and Yuebo Luo for the helpful discussion.

\newpage
\bibliographystyle{abbrvnat} 
\bibliography{reference}

\section{Appendix}

\subsection{Memory Calculation: Model, Gradient, Optimizer States, and Activations}

In this section, we discuss how the memory estimations are calculated for all methods and model sizes. The total memory requirements for a deep learning model can be broken down into four key components:

\begin{itemize}
    \item \textbf{Model Parameters:} The memory required to store the model's weights.
    \item \textbf{Gradients:} The memory needed to store gradients during backpropagation.
    \item \textbf{Optimizer States:} Memory required for optimizer-specific variables (e.g., Adam uses first and second moment estimates).
    \item \textbf{Activations:} Memory required to store intermediate results during forward propagation.
\end{itemize}

The total memory requirement can be expressed as:

\[
\text{Memory} = \text{Model} + \text{Gradient} + \text{Optimizer States} + \text{Activations}.
\]

\subsubsection{Memory for Model, Gradients, and Optimizer States}

\begin{itemize}
    \item \textbf{Model Parameters:} The memory required is proportional to the number of parameters in the model and the storage format. For BF16 (2 bytes per parameter), the memory is \( \text{Number of Parameters} \times 2 \).
    \item \textbf{Gradients:} The memory required is equal to that of the model parameters, as each parameter has a corresponding gradient.
    \item \textbf{Optimizer States:} For Adam, the optimizer requires memory for two additional states per parameter (first and second moments), which is twice the gradient memory.
\end{itemize}

\subsubsection{Activation Memory}

Activation memory depends on the architecture and specific computations performed during forward propagation. For the LLaMA architecture, activation memory can be decomposed as follows:

\paragraph{Definitions of Parameters:}
\begin{itemize}
    \item \textbf{\( b \)}: batch size
    \item \textbf{\( s \)}: sequence length
    \item \textbf{\( h \)}: embedding and hidden dimension
    \item \textbf{\( l \)}: number of Transformer layers
    \item \textbf{\( a \)}: number of attention heads
    \item \textbf{\( k \)}: output dimension of \( W_{\text{gate}} \) and \( W_{\text{up}} \) in the MLP
    \item \textbf{\( v \)}: vocabulary size
\end{itemize}

\paragraph{Activation Memory Components:}
\begin{itemize}
    \item \textbf{Embedding Activations:} \( sh \), for storing input embeddings.
    \item \textbf{Multi-Head Attention Activations:} \( 5sh + 2s^2a \), including intermediate computations for queries, keys, values, attention outputs, and softmax.
    \item \textbf{MLP Activations:} \( 4sk \), for gate, activation, and intermediate values in the MLP.
    \item \textbf{Probability (Output) Activations:} \( 2sv \), for logits and softmax probabilities.
\end{itemize}

\paragraph{MLP Activation Memory Derivation:}

The MLP layer in Transformer architecture typically contains two linear transformations and a non-linear activation. During forward propagation, the following intermediate activations are stored:

\begin{itemize}
    \item \( \mathbf{g} = W_{\text{gate}} \mathbf{x} \): Output of the gating branch.
    \item \( \mathbf{u} = W_{\text{up}} \mathbf{x} \): Intermediate output of the signal branch before activation.
    \item \( \mathbf{a} = \operatorname{SiLU}(\mathbf{u}) \): Activated output of the signal branch.
    \item \( \mathbf{z} = \mathbf{g} \odot \mathbf{a} \): Element-wise product of the gating and activated signal branches.
\end{itemize}

To compute gradients during backpropagation, we need to store the following:

1. \( \mathbf{g} \), to compute the gradient of \( W_{\text{gate}} \).
2. \( \mathbf{u} \), to compute the gradient of \( W_{\text{up}} \), as SiLU's derivative depends on \( \mathbf{u} \).
3. \( \mathbf{a} \), as part of the gradient computation for \( \mathbf{z} \).
4. \( \mathbf{z} \), or equivalently \( \mathbf{g} \) and \( \mathbf{a} \), to reconstruct \( \mathbf{z} \).

Thus, the total memory required for MLP activations is:

\[
\text{MLP Activations} = 4sk,
\]

where \( sk \) corresponds to the memory for each activation (e.g., \( \mathbf{g}, \mathbf{u}, \mathbf{a}, \mathbf{z} \)).

\paragraph{Total Activation Memory:}
The total activation memory is given by:

\begin{equation}
\text{Total Activations} = b \cdot \Big( sh + l \big( 5sh + 2s^2a + 4sk \big) + 2sv \Big) \tag{1}
\end{equation}

For large models (when \( l \) dominates), this can be simplified to:

\begin{equation}
\text{Total Activations (Simplified)} = bl \cdot \big( 5sh + 2s^2a + 4sk \big) \tag{2}
\end{equation}

\subsubsection{Example: LLaMA 7B Model}

We illustrate the memory calculation using the \textbf{LLaMA 7B} model. The relevant parameters are:

\begin{itemize}
    \item \textbf{Batch size (\( b \))}: 1
    \item \textbf{Sequence length (\( s \))}: 2048
    \item \textbf{Embedding/Hidden size (\( h \))}: 4096
    \item \textbf{Number of Transformer layers (\( l \))}: 32
    \item \textbf{Attention heads (\( a \))}: 32
    \item \textbf{Intermediate MLP size (\( k \))}: 11008
    \item \textbf{Vocabulary size (\( v \))}: 32000
\end{itemize}

Assume \textbf{1GB = \( 1024^3 \)} bytes, and each parameter uses 2 bytes (BF16 format).

\paragraph{1. Model Parameters:}
The memory for storing model parameters is:

\[
\text{Model Memory} = 7,000,000,000 \times 2 = 14,000,000,000 \, \text{bytes} \approx 13.04 \, \text{GB}.
\]

\paragraph{2. Gradients:}
The memory for gradients is identical to the model memory:

\[
\text{Gradient Memory} = 14,000,000,000 \, \text{bytes} \approx 13.04 \, \text{GB}.
\]

\paragraph{3. Optimizer States:}
For Adam, the optimizer requires twice the gradient memory:

\[
\text{Optimizer Memory} = 2 \times 14,000,000,000 = 28,000,000,000 \, \text{bytes} \approx 26.08 \, \text{GB}.
\]

\paragraph{4. Activations:}
The activation memory is computed as follows:

\begin{itemize}
    \item \textbf{Embedding Activations:} \( sh = 2048 \times 4096 = 8,388,608 \, \text{bytes} \).
    \item \textbf{Attention Activations:}
    \begin{align*}
        5sh &= 5 \times 2048 \times 4096 = 41,943,040, \\
        2s^2a &= 2 \times 2048^2 \times 32 = 268,435,456, \\
        \text{Total Attention:} &\quad 32 \cdot (41,943,040 + 268,435,456) = 9,932,111,872 \, \text{bytes}.
    \end{align*}
    \item \textbf{MLP Activations:} \( l \cdot 4sk = 32 \cdot 4 \cdot 2048 \cdot 11008 = 2,885,681,152 \, \text{bytes}. \)
    \item \textbf{Probability Activations:} \( 2sv = 2 \times 2048 \times 32000 = 131,072,000 \, \text{bytes}. \)
\end{itemize}

The total activation memory is:

\[
\begin{split}
    \text{Total Activations} &= 8,388,608 + 9,932,111,872 + 2,885,681,152 + 131,072,000 \\ &= 12,957,253,632 \, \text{bytes} \approx 12.96 \, \text{GB}.
\end{split}
\]

\paragraph{Total Memory Requirements:}

The total memory required for the LLaMA 7B model is:

\begin{itemize}
    \item \textbf{Model Parameters:} 13.04 GB
    \item \textbf{Gradients:} 13.04 GB
    \item \textbf{Optimizer States:} 26.08 GB
    \item \textbf{Activations:} 24.13 GB
\end{itemize}

Therefore the total memory is calculated as:
\[
\text{Total Memory} = 13.04 + 13.04 + 26.08 + 24.13 = 76.29 \, \text{GB}.
\]

Other methods and model sizes can be calculated similarly.

\subsection{Detailed settings for the benchmark experiments}\label{appendix:setting}

For models sizes up to 350M parameters, for all the methods we choose the best learning rate based on validation perplexity from \{0.0005, 0.001, 0.002, 0.003, 0.005, 0.01\}. For the 1B model we use as a starting point the optimal learning rate from the 350M model and further manually tune around that value. The learning rate schedule for all experiments is the same as in \cite{zhao2024galore}, linear warm up for the first 10\% of the training iterations, followed by cosine annealing decaying to 10\% of the initial learning rate. In addition, for the proposed methods on every restart we drop the learning rate to zero and perform a quick linear warm up, similar to \cite{lialin2024relora}.   For GaLore, Fira, SLTrain and Stable-SPAM all other hyperparameters are chosen according to their official code-bases.

\noindent\textbf{Computation of FLOPs}. We use the following empirical law for compute~\citep{kaplan2020scaling}: $C=6NBS$ , where $C$ is the compute (in FLOPs), $N$ is the number of non-embedding parameters, $B$ is the batch size (both of which we know in advance) and $S$ is the number of training steps. Then by logging ($S$, Perplexity) pairs during execution we can generate a compute-perplexity plot.

\subsection{Proofs for Section \ref{sec:innovation}}\label{appendix:proof}

\begin{proof}[Proof of Lemma \ref{lemma_refactor}]
We first derive $\nabla^2 L(B^*, A^*)$ in terms of $\nabla^2 \ell(B^*A^*)$ as 
\begin{align*}
    &\langle [\Delta B, \Delta A], \nabla^2 L(B^*, A^*)[\Delta B, \Delta A] \rangle \\
    &= \langle \Delta B A^* + B^* \Delta A , \nabla^2 \ell (B^* A^*) [\Delta BA^* + B^* \Delta A] \rangle \\
    &= \mathrm{vec}([\Delta B, \Delta A])^\top \begin{bmatrix} A^* \otimes I \\ I \otimes B^{*\top} \end{bmatrix} \mathrm{mat}( \nabla^2 \ell(BA)  ) \begin{bmatrix} 
        A^{*\top} \otimes I  & I \otimes B^*
    \end{bmatrix} \mathrm{vec}([\Delta B, \Delta A])
\end{align*}
where we use $\mathrm{vec}(\cdot), \mathrm{mat}(\cdot)$ to represent vectorization and matricization operations. Based on the derivation, the eigenvalues of $\nabla^2 L(B^*, A^*)$ are the eigenvalues of the matrix 
\begin{align*}
    \begin{bmatrix} A^* \otimes I \\ I \otimes B^{*\top} \end{bmatrix} \mathrm{mat}( \nabla^2 \ell(B^*A^*)  ) \begin{bmatrix} 
        A^{*\top} \otimes I  & I \otimes B^*
    \end{bmatrix}
\end{align*}
which are the eigenvalues of 
\begin{align*}
    H^* \coloneqq  \mathrm{mat}(\nabla^2 \ell(B^*A^*) )^{1/2} ( (A^{*\top} A^* \otimes I) + (I \otimes B^*B^{*\top})  ) \mathrm{mat}(\nabla^2 \ell(B^*A^*) )^{1/2}
\end{align*}
From \citep{merikoski2004inequalities}, we can both lower and upper bound the eigenvalues of $H^*$ as
\begin{align*}
    &\underline{\lambda} (\lambda_{\rm max}(A^{*\top} A^*) + \lambda_{\rm max}(B^* B^{*\top})) \leq \lambda_{\rm max}(H^*) \leq \overline{\lambda} (\lambda_{\rm max}(A^{*\top} A^*) + \lambda_{\rm max}(B^* B^{*\top})) \\
    &\underline{\lambda} (\lambda_{\rm min}(A^{*\top} A^*) + \lambda_{\rm min}(B^* B^{*\top})) \leq \lambda_{\rm min}(H^*)\leq \overline{\lambda}(\lambda_{\rm min}(A^{*\top} A^*) + \lambda_{\rm min}(B^* B^{*\top}))
\end{align*}
and thus the condition number can be bounded as 
\begin{align*}
    \frac{\underline{\lambda} (\lambda_{\rm max}(A^{*\top} A^*) + \lambda_{\rm max}(B^* B^{*\top}))}{\overline{\lambda} (\lambda_{\rm min}(A^{*\top} A^*) + \lambda_{\rm min}(B^* B^{*\top}))} \leq \kappa(H^*) \leq \frac{\overline{\lambda} (\lambda_{\rm max}(A^{*\top} A^*) + \lambda_{\rm max}(B^* B^{*\top}))}{\underline{\lambda} (\lambda_{\rm min}(A^{*\top} A^*) + \lambda_{\rm min}(B^* B^{*\top}))}
\end{align*}
It remains to bound the term $\frac{ \lambda_{\rm max}(A^{*\top} A^*) + \lambda_{\rm max}(B^* B^{*\top})}{ \lambda_{\rm min}(A^{*\top} A^*) + \lambda_{\rm min}(B^* B^{*\top})}$. 

Let $W^* = B^* A^*$ and suppose $B^* = U \Sigma^{\alpha}$ and $A^* = \Sigma^{1-\alpha} V^\top$ where $U \Sigma V^\top = W^*$ is the SVD. Then we can further derive 
\begin{align*}
    \frac{ \lambda_{\rm max}(A^{*\top} A^*) + \lambda_{\rm max}(B^* B^{*\top})}{ \lambda_{\rm min}(A^{*\top} A^*) + \lambda_{\rm min}(B^* B^{*\top})} &= \frac{\max(\Sigma)^{2\alpha} + \max(\Sigma)^{2-2\alpha}}{\min(\Sigma)^{2\alpha} + \min(\Sigma)^{2-2\alpha}} 
\end{align*}
which is minimized when $\alpha = 1/2$. 
\end{proof}

\begin{proof}[Proof of Theorem \ref{thm:sgdm_restart}]
    We follow the proof in \citet[Section 7]{garrigos2023handbook}.
    
    Suppose we consider the Algorithm \ref{algo:sgdmr} in the iteration $t=kT, kT+1,...,(k+1)T$ (corresponding to $j=0,1,...,T-1$) where $k\in\{0,1,2,3...,K-1\}$ is fixed. Since we reset the momentum $M^t$ into zero at $t=kT-1$, we are essentially running \eqref{eq:sgdm} for $T$ steps, with
    \begin{equation*}
        \gamma_j=\frac{2\eta}{N+j+3},\ \beta_j=\frac{N+j}{N+j+2},\text{ with }\eta\leq\frac{1}{4L},\ j=0,1,...,T-1
    \end{equation*}
    where $N=kT$. 

    Denote $Z^{-1} = W^{N}$, $\lambda_j=\frac{N+j}{2}$ and $Z^{j-1}=W^{N+j}+\lambda_j\left(W^{N+j}-W^{N+j-1}\right)$. Repeating the proof of \citet[Theorem 7.4]{garrigos2023handbook}, we know that
    \begin{equation*}
        \E[\|Z^{T} - W^*\|^2]\leq \|Z^{-1} - W^*\|^2 - 2\eta\lambda_{T}\E[f(W^{(k+1)(T+1)})-f^*]+ 2\eta\lambda_{0}\E[f(W^{k(T+1)})-f^*] + 2\eta^2 G^2 (T+1)
    \end{equation*}
    i.e
    \begin{equation*}
        2\eta\lambda_{T+1}\E[f(W^{(k+1)(T+1)})-f^*]\leq \|Z^{-1} - W^*\|^2 - \E[\|Z^{T} - W^*\|^2] + 2\eta\lambda_{0}\E[f(W^{k(T+1)})-f^*] + 2\eta^2 G^2 (T+1)
    \end{equation*}
    thus
    \begin{equation*}
        \E[f(W^{(k+1)(T+1)})-f^*]\leq \frac{\|Z^{-1} - W^*\|^2}{2\eta\lambda_{T+1}} - \frac{\E[\|Z^{T} - W^*\|^2]}{2\eta\lambda_{T+1}} + \frac{\lambda_{0}}{\lambda_{T+1}}\E[f(W^{k(T+1)})-f^*] + \frac{\eta G^2 T}{\lambda_{T+1}}
    \end{equation*}
    i.e.,
    \begin{equation*}
        \E[f(W^{(k+1)(T+1)})-f^*]\leq \frac{\|Z^{-1} - W^*\|^2}{\eta(k+1)(T+1)} - \frac{\E[\|Z^{T} - W^*\|^2]}{\eta(k+1)(T+1)} + \frac{k}{k+1}\E[f(W^{k(T+1)})-f^*] + \frac{2\eta G^2}{(k+1)}
    \end{equation*}

    Now sum above equations up for $k=0,1,2,...,K-1$ (since $Z^{-1}=W^{N}=W^{kT}$), 
    \begin{equation*}
        \E[f(W^{K(T+1)}) - f^*]\leq \sum_{k=0}^{K-1}\frac{\|W^{kT} - W^*\|^2}{\eta(k+1) (T+1)} + \sum_{k=0}^{K-1}\frac{2\eta G^2}{k+1}.
    \end{equation*}
    
\end{proof}

\end{document}